\theoremstyle{plain}
\newtheorem{theorem}{Theorem}[section]
\newtheorem{lemma}[theorem]{Lemma}
\theoremstyle{definition}
\newtheorem{definition}[theorem]{Definition}
\theoremstyle{remark}
\begin{document}

\twocolumn[
\icmltitle{No Free Prune: Information-Theoretic Barriers to Pruning at Initialization}




\icmlsetsymbol{equal}{*}

\begin{icmlauthorlist}
\icmlauthor{Tanishq Kumar}{equal,sch}
\icmlauthor{Kevin Luo}{equal,sch}
\icmlauthor{Mark Sellke}{sch}

\end{icmlauthorlist}

\icmlaffiliation{sch}{Harvard University}

\icmlcorrespondingauthor{Tanishq Kumar}{tkumar@college.harvard.edu}

\icmlkeywords{Machine Learning, ICML}

\vskip 0.3in
]



\printAffiliationsAndNotice{\icmlEqualContribution} 

\begin{abstract}
The existence of ``lottery tickets" \cite{frankle2018lottery} at or near initialization raises the tantalizing question of whether large models are necessary in deep learning, or whether sparse networks can be quickly identified and trained without ever training the dense models that contain them. However, efforts to find these sparse subnetworks without training the dense model (``pruning at initialization") have been broadly unsuccessful \cite{frankle2020pruning}. We put forward a theoretical explanation for this, based on the model's \emph{effective parameter count}, $p_{\text{eff}}$, given by the sum of the number of non-zero weights in the final network and the \emph{mutual information} between the sparsity mask and the data. We show the Law of Robustness of \cite{bubeck2021universal} extends to sparse networks with the usual parameter count replaced by $p_{\text{eff}}$, meaning a sparse neural network which robustly interpolates noisy data requires a heavily data-dependent mask. We posit that pruning during and after training outputs masks with higher mutual information than those produced by pruning at initialization. Thus two networks may have the same sparsities, but differ in \textit{effective} parameter count based on how they were trained. This suggests that pruning near initialization may be infeasible and explains why lottery tickets \textit{exist}, but cannot be found fast (i.e. without training the full network). Experiments on neural networks confirm that information gained during training may indeed affect model capacity.
\end{abstract}

\section{Introduction}
\label{submission}

Motivated by perennially growing model sizes and associated costs, neural network pruning is a technique used to reduce the size and cost of neural networks during training or inference, while maintaining performance on a task. 

Typically, pruning is done by masking away a certain fraction of weights (setting them to zero), so that they can be ignored for the purposes of training or inference, reducing the number of operations and thus cost required to achieve good performance on a task. There are three stages of the machine learning pipeline at which networks can be pruned:
\begin{enumerate}
    \item \emph{At initialization}, before training weights on the data \cite{lee2018snip, tanaka2020pruning, wang2020picking}, often using the network's connection structure or the loss landscape around initialization.
    \item \emph{During training}, usually in an gradual manner, starting with the dense network, training on the data, and pruning some fraction of the smallest magnitude weights, and repeating. Methods differ on what they do to weights after each prune step. The contribution of \cite{frankle2018lottery} was showing rewinding to initial weights is important for good performance at this stage of pruning. Other methods include \cite{h.2018to}, which iteratively prunes the smallest magnitude weights according to a predefined sparsity schedule. 
    
    \item \emph{After training}, before inference, usually using simple but effective heuristics involving dropping the lowest magntitude weights \cite{han2015learning}
\end{enumerate}

\cite{frankle2018lottery} show empirically that sparse subnetworks that can train to accuracy matching or exceeding that of the full, dense model, do indeed exist at or near initialization (``matching" subnetworks). This work and the follow-ups \cite{frankle2019stabilizing, frankle2020linear} present an algorithm to find these subnetworks called \emph{iterative magnitude pruning} (IMP) with weight rewinding, dubbing such subnetworks present at initialization ``lottery tickets." In principle, this raises the enticing prospect of quickly finding these networks at initialization and training only at high sparsity, but IMP requires repeatedly training the whole model on the dataset to find these lottery tickets, defeating the original point of finding highly sparse yet \textit{trainable} subnetworks.

Since then, research on ``pruning at initialization" has sought to find these lottery tickets fast (i.e., without training the full model).  Methods in stage (3) of the machine learning pipeline serve as benchmarks for sparsity, where those proposing pruning methods in stages (1) and (2) attempt to produce models as small as those that can be found in stage (3), where one can prune to reasonably high sparsities without compromising accuracy \cite{frankle2020pruning}. There remains to this day a large amount of effort to develop algorithms to prune at stage (1) \cite{tanaka2020pruning, lee2018snip, wang2020picking, pham2022understanding, wang2020picking}, but despite the diversity of techniques used, these algorithms are typically unsuccessful in finding  ``lottery tickets" in general settings without training. Note that our focus throughout this paper is restricted to algorithms producing \textit{matching} subnetworks, as it is of course possible to prune a network at initialization if one sacrifices accuracy.  

An important observation that if such an algorithm to find a matching subnetwork for a task at initialization did exist, it would suggest that the modern deep learning paradigm of training large models is misguided, as one could prune at initialization and then train the sparse subnetwork cheaply to achieve the same performance on a task. The existence of such an algorithm is also in tension with a large theoretical literature on the benefits and necessity of overparameterization \cite{allen2019convergence, du2018gradient, simon2023more, neyshabur2017implicit, geiger2019jamming, bartlett2020benign, montanari2024tractability}. Any theory that seeks to formalize the intractability of pruning at initialization, however, must also explain why lottery tickets can \textit{exist}, but not be found efficiently (i.e., without training the full network on the data). 

\section{Related Work}

\textbf{Lottery tickets and sparsity.} Pruning neural networks has a long history, from classic techniques that prune weights based on connectivity metrics involving the Jacobian and Hessian \cite{lecun1990optimal, hassibi1992second} to simple but effective modern methods based on weight magnitude \cite{han2015learning, wen2016learning, molchanov2016pruning} and more recently the Lottery Ticket Hypothesis \cite{frankle2018lottery} and associated follow-up works \cite{zhou2019deconstructing, chen2020lottery, frankle2020linear, gale2019state, liu2018rethinking}, see \cite{blalock2020state} for a general survey. \cite{paul2022unmasking} give a thorough loss landscape perspective on the lottery ticket hypothesis, but their work is empirical and not concerned with pruning at initialization, rather intending to illuminate the mechanism of IMP geometrically. 

One important limitation of lottery tickets is that they are are subnetworks with \textit{unstructured} sparsity (pruning individual weights), which is difficult to accelerate on modern hardware, whereas \textit{structured} sparsity (pruning entire neurons or convolutional channels) \cite{han2016eie, he2018amc, zhang2020sparch, mao2017exploring} is more exploitable by hardware and so often leads to more drastic performance gains despite lower levels of end-time sparsity. While we present our results to address the difficulty of finding lottery tickets, our theorems are based on notions of parameter counts which are easily adaptable to structured pruning.

\textbf{Pruning at initialization.} The most extensive empirical evaluation of pruning methods at initialization is \cite{frankle2020pruning}, which finds that the most popular methods for the task, including SNIP \cite{lee2018snip}, SynFlow \cite{tanaka2020pruning}, and GraSP \cite{wang2020picking}, barely outperform random pruning, and are significantly beaten by even naive methods to prune \textit{after} training \cite{han2015learning}. Their most relevant findings are: 
\begin{itemize}
    \item Allowing methods designed to prune at initialization to train the full network on more data or for longer improves the performance of the subnetwork derived from pruning in a smooth manner. Then, applying any of these pruning at initialization methods to a full network \textit{after training it} allows aggressive pruning without compromising accuracy. This suggests that something happens while training the full network which makes it possible to prune aggressively without sacrificing accuracy. 
    \item Methods using various statistics of the data at initialization like SNIP and GraSP (but not training) do no better than data-agnostic pruning at initialization (SynFlow), so that ``pruning at initialization" can roughly be seen as ``pruning data-agnostically."
    \item 
    \cite{frankle2020pruning} explicitly comment on how striking it is that methods that use such different signals (magnitudes; gradients; Hessian; data or lack thereof) end up reaching similar accuracy, behave similarly under ablations, and improve similarly when pruning after initialization. We argue there may be fundamental information-theoretic barriers causing these diverse methods to fail in very similar ways at high sparsities. 
\end{itemize}

\cite{evci2019difficulty} is another empirical work exploring the difficulty of training sparse networks, or, equivalently, pruning at initialization. Other works on efficiently finding lottery tickets include \cite{alizadeh2022prospect, jorge2021progressive}, with some new works such as \cite{ramanujan2020whats, sreenivasan2022rare} attempting to train a good mask at initialization by maximizing network accuracy. 



\textbf{Overparameterization and effective parameter count.} One of the biggest surprises in modern deep learning models is that overparameterized models generalize well and don't overfit \cite{zhang2021understanding}. A great deal of work has gone into quantitative analysis of this mystery. We build in particular on \cite{bubeck2021universal}, where it is shown that overparameterization by a factor of the data dimension is \textit{necessary} for \textit{smooth interpolation}. Other releveant work includes \cite{allen2019convergence}, which proves that SGD can find global minima in polynomial time of number of layers and parameters if sufficiently overparameterized, and \cite{du2018gradient} which proves gradient descent converges in linear time to a global optimum for a two-layer ReLU network if sufficiently overparameterized. \cite{simon2023more} prove that overparameterization is necessary for near-optimal performance in random feature regression. 

\textbf{Mutual information and generalization bounds.}
Since \cite{russo2019much} studied ``bad information usage" in the setting of adaptive data analysis, much work has been done on bounding the generalization gap using the informational quantity $I(W;\mathcal{D})$ where $W$ represents the chosen hypothesis by the learning algorithm, and $D$ the dataset sampled from a data distribution \cite{xu2017information, bu2020tightening, asadi2018chaining}. The main takeaway from these works is that learning algorithms whose mutual information with the data is low must generalize well. From a technical viewpoint our work is closely related to these, although the motivation is different.



\section{Contributions}

\begin{itemize}
    \item We state and prove a modified version of the Law of Robustness in \cite{bubeck2021universal} where parameter count is replaced with a data-dependent ``effective parameter count" that includes both the number of parameters and the mutual information of the sparsity mask with the dataset, showing a new way in which information and parameters can be traded off. In fact, this general principle of trading off parameter count for mutual information with the data extends beyond the Law of Robustness, as we show in Appendix \ref{app:gbounds}. 
    \item We examine the consequences of this result for the tractability of pruning at initialization, the most important of which is the observation that subnetworks derived from pruning algorithms that train on the data, such as lottery tickets, are not really sparse in \textit{effective parameter count}, whereas those derived from pruning at initialization are. We outline how this is may explain why lottery tickets exist, but cannot be found fast (i.e., without training the full network). 
    \item We perform experiments on neural networks where our mutual information quantities of interest can approximated and track these quantities during training. We find that at the same sparsity (parameter count), subnetworks derived from pruning algorithms that train the full network on the data have higher capacity and expressivity than those that prune at initialization, reflecting their higher \textit{effective} parameter count. 
\end{itemize}

\section{Informally Stated Results \& Implications}

Here we give an informal statement of our main theoretical result, followed by a discussion of its interpretation and consequences.
Formal statements can be found in Section 5, with full proofs deferred to the Appendix. 
We set the scene with the original Law of Robustness. 
\begin{theorem}[{Theorem 1, \cite{bubeck2021universal}}, informal]\label{thm:lor}
    Let $\mathcal{F}$ be a class of functions from $\mathbb{R}^d \rightarrow \mathbb{R}$ and let $\left(x_i, y_i\right)_{i \in[n]}$ be i.i.d. input-output pairs in $\mathbb{R}^d \times[-1,1]$. Assume that:
\begin{enumerate}[label=(\alph*)]
    \item $\mathcal{F}$ admits a Lipschitz parametrization by $p$ real parameters, each of size at most $\operatorname{poly}(n, d)$.
    \item The covariate distribution $\mu$ is a mixture of $O(n/\log n)$ ``truly high-dimensional'' components exhibiting certain concentration behavior. 
    \item The expected conditional variance of the output, $\sigma^2 \equiv \mathbb{E}^\mu[\operatorname{Var}[y \mid x]]>0$, is strictly positive.
\end{enumerate}
Then, with high probability over the sampling of the data, one has simultaneously for all $f \in \mathcal{F}$:
\[
\frac{1}{n} \sum_{i=1}^n\left(f\left(x_i\right)-y_i\right)^2 \leq \sigma^2-\epsilon \Rightarrow \operatorname{Lip}(f) \geq \widetilde{\Omega}\left(\frac{\epsilon}{\sigma} \sqrt{\frac{n d}{p}}\right).
\]
Here $\operatorname{Lip}(f)$ denotes the Lipschitz constant of $f$.
\end{theorem}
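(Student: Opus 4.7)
The strategy is an $\epsilon$-net over $\mathcal{F}$ combined with a concentration-of-measure bound for a single Lipschitz function. I would prove the contrapositive: assuming every $f \in \mathcal{F}$ satisfies $\operatorname{Lip}(f) \le L$ with $L \ll (\epsilon/\sigma)\sqrt{nd/p}$ (up to logarithmic factors), I would show that with high probability no $f \in \mathcal{F}$ achieves empirical MSE below $\sigma^2 - \epsilon$.

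\textbf{Step 1 (single-function concentration).} Fix an $L$-Lipschitz $f$. On each ``truly high-dimensional'' component $\mu_k$ of the mixture, the covariate $x$ satisfies a Lipschitz concentration inequality, so $f(x)$ is sub-Gaussian around $\mathbb{E}_{\mu_k}[f]$ with variance proxy $O(L^2/d)$. Within each component $f(x_i)$ is therefore essentially a constant $c_k$, so the contribution to the empirical MSE from that component is approximately the sample variance of $y$ on $\mu_k$, which concentrates around its population value. Summing over the $O(n/\log n)$ components and using assumption (c), the total empirical MSE stays above $\sigma^2 - O(L^2/d) - O(\text{sampling error})$. A Bernstein/Hoeffding estimate turns this into a tail bound of the shape $\Pr[\text{MSE}(f) \le \sigma^2 - \epsilon] \le \exp(-c\, n d \epsilon^2/(\sigma^2 L^2))$.

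\textbf{Step 2 (net + union bound).} Assumption (a) gives a $\operatorname{poly}(n,d)$-Lipschitz parametrization $w \mapsto f_w$ with $w$ lying in a box of radius $\operatorname{poly}(n,d)$. Discretizing the parameter box at scale $\eta = \epsilon/\operatorname{poly}(n,d)$ gives a net of size $\exp(p \log \operatorname{poly}(n,d))$, whose image is an $\epsilon$-net of $\mathcal{F}$ on the sample points that also preserves Lipschitz constants up to $O(\epsilon)$. Applying Step 1 to each net element and union bounding, the failure probability is at most $\exp\bigl(p\log\operatorname{poly}(n,d) - c n d \epsilon^2/(\sigma^2 L^2)\bigr)$. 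This is $o(1)$ exactly when $L \ge \widetilde{\Omega}\bigl((\epsilon/\sigma)\sqrt{nd/p}\bigr)$, which is the claimed bound.

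\textbf{Main obstacle.} The delicate ingredient is Step 1: one needs a concentration estimate that (i) is genuinely sub-Gaussian in $L^2/d$, exploiting the high-dimensional isoperimetry of each component; (ii) remains valid after mixing, so cross-component variation of $f$ (which can be as large as $L$) does not collapse the lower bound; and (iii) is sharp enough on the $n$-sample empirical MSE to absorb the $p \log \operatorname{poly}(n,d)$ entropy of the net. Extracting the correct dependence on the noise level $\sigma$ requires a careful anti-concentration argument separating $f(x)$ from the random fluctuations of $y - \mathbb{E}[y\mid x]$, which is the technical core of the original proof in \cite{bubeck2021universal}.
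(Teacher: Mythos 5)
Your plan is correct and follows essentially the same route as the original proof in \cite{bubeck2021universal}, which is what this paper does for Theorem~\ref{thm:lor} (it cites the result rather than reproving it, and its appendix Lemmas~\ref{lem:noise-to-raw-corr}--\ref{lem:raw-corr-to-centered} import exactly your Step 1 reduction): fitting below $\sigma^2-\epsilon$ forces the centered correlation $\frac{1}{n}\sum_i (f(x_i)-\mathbb{E}^{\mu_{\ell_i}}[f])z_i$ to be large, isoperimetry makes this $O(cL^2/(nd))$-subgaussian for each fixed $f$, and a union bound over an $e^{\widetilde{O}(p)}$ discretization finishes. Two small points to tighten: the ``sample variance'' framing in your Step 1 hides that the dangerous term is precisely this cross term, whose $L/\sqrt{nd}$ (rather than $L/\sqrt{d}$) scale relies on $z_i$ being conditionally mean-zero given $x_i$; and the net should be taken inside the sublevel set $\{\mathbf{w} : \mathrm{Lip}(f_{\mathbf{w}})\le L\}$, so that no ``Lipschitz preservation'' property of the discretization is needed.
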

In essence, this theorem states that with high probability, any parameterized function $f \in \mathcal{F}$ that has training error below the noise level $\sigma^2$ has smoothness $1/\text{Lip}(f)$ increasing in the number of parameters. Thus overparameterization is, under these assumptions, \textit{necessary} for smooth interpolation, presumably a pre-requisite for robust generalization (more in Section~\ref{sec:prelim}). 

Our main result combines this with information theoretic results, giving in a modified version of the Law of Robustness that suggest fundamental limits for pruning at initialization.

\begin{theorem}[Informal, Modified Law of Robustness]\label{thm:informal}
Assume the same conditions as in Theorem~\ref{thm:lor} and that $\mathcal{F}$ has the additional structure of masks, so that each hypothesis $f \in \mathcal{F}$ has parameters $(\mathbf{m}, \mathbf{w})$ satisfying $\mathbf{m}_i = 0 \implies \mathbf{w}_i = 0$ for all $i \in [p]$. Then, with high probability over sampling of the data, one has for any learning algorithm $W$ taking in data $\mathcal{D}$ and outputting function $f^W \in \mathcal{F}$:
\begin{multline}
    \frac{1}{n} \sum_{i = 1} ^n (f^W(x_i) -y_i)^2 \leq \sigma^2 - \epsilon \\
    \implies \mathrm{Lip}(f) \geq \widetilde\Omega\left(\epsilon \sqrt{\frac{nd}{p_{\text{eff}}}}\right),
\end{multline}
where $p_{\text{eff}} = \tilde{\Theta}\left(I(\mathbf{m}^W; \mathcal{D}) + \mathbb{E}[\|\mathbf{m}\|_1]\right)$. 
\end{theorem}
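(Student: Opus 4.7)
The plan is to extend the original Law of Robustness (Theorem~\ref{thm:lor}) by a two-step decomposition: first reduce to a fixed-mask problem, in which only the non-zero weights act as real parameters, and then pay for the data-dependence of $\mathbf{m}^W$ via an information-theoretic change-of-measure bound. For each fixed mask $\mathbf{m}_0 \in \{0,1\}^p$, the subclass $\mathcal{F}_{\mathbf{m}_0} := \{f \in \mathcal{F} : \text{mask of } f \text{ equals } \mathbf{m}_0\}$ admits a Lipschitz parametrization by $\|\mathbf{m}_0\|_1$ real numbers (its unmasked weights), so Theorem~\ref{thm:lor} applies directly to $\mathcal{F}_{\mathbf{m}_0}$ with $p$ replaced by $\|\mathbf{m}_0\|_1$. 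Tracking the constants in the Bubeck--Sellke argument, this can be strengthened to a $\delta$-dependent form: for any target $\delta > 0$, with probability at least $1-\delta$ over $\mathcal{D}$, every $f \in \mathcal{F}_{\mathbf{m}_0}$ with training error at most $\sigma^2 - \epsilon$ satisfies $\mathrm{Lip}(f) \geq \tilde\Omega\bigl(\epsilon\sqrt{nd/(\|\mathbf{m}_0\|_1 + \log(1/\delta))}\bigr)$, since the $\epsilon$-net in their proof has log-size $\tilde O(\|\mathbf{m}_0\|_1)$ and Chernoff concentration lets us absorb an additional $\log(1/\delta)$ into the cover.

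To translate this per-mask statement into one that handles the data-dependent mask $\mathbf{m}^W$, I would apply an information-theoretic change-of-measure. Let $B(\mathbf{m}_0, \mathcal{D})$ denote the bad event that some $f \in \mathcal{F}_{\mathbf{m}_0}$ attains training error below $\sigma^2 - \epsilon$ and Lipschitz constant below the target $L^\star := \tilde\Omega(\epsilon \sqrt{nd/p_{\text{eff}}})$. Setting $\delta = \exp(-CI(\mathbf{m}^W;\mathcal{D}))$ in the per-mask bound for a large constant $C$ yields $\mathbb{E}_{\tilde{\mathbf{m}} \perp \mathcal{D}}[\Pr_\mathcal{D}[B(\tilde{\mathbf{m}}, \mathcal{D})]] \leq \exp(-\Omega(I(\mathbf{m}^W;\mathcal{D})))$, where the outer expectation is over an independent copy $\tilde{\mathbf{m}}$ drawn from the marginal of $\mathbf{m}^W$. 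The Donsker--Varadhan variational formula (or equivalently the MI-based generalization inequalities of Russo--Zou and Xu--Raginsky) then converts this product-measure bound into one under the true joint distribution, with inflation factor controlled by $\exp(I(\mathbf{m}^W;\mathcal{D}))$. By construction this inflation is exactly compensated by the $\exp(-CI)$ buffer built into $\delta$, giving $\Pr[B(\mathbf{m}^W, \mathcal{D})] = o(1)$. Finally, since $p_{\text{eff}}$ is stated in terms of $\mathbb{E}[\|\mathbf{m}\|_1]$ rather than $\|\mathbf{m}^W\|_1$, a Markov-type bound on the event $\|\mathbf{m}^W\|_1 \leq 2\mathbb{E}[\|\mathbf{m}\|_1]$ reconciles the two quantities up to constants absorbed into $\tilde\Omega$.

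The main obstacle I expect is aligning the quantitative exponents cleanly. The Bubeck--Sellke proof trades Lipschitz threshold against $\epsilon$-net size in a delicate way, and the change-of-measure step inflates failure probabilities by roughly $\exp(I(\mathbf{m}^W;\mathcal{D}))$; strengthening the per-mask bound to failure probability $\exp(-CI)$ must not grow the Lipschitz threshold by more than a constant factor, which forces the $\log(1/\delta)$ term in the per-mask bound to remain subdominant to $\|\mathbf{m}_0\|_1 + I$. Verifying this, and ensuring that the final effective parameter count emerges as $I(\mathbf{m}^W;\mathcal{D}) + \mathbb{E}[\|\mathbf{m}\|_1]$ up to polylogarithmic factors rather than a polynomial inflation of it, is where the bulk of the technical work lies. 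A secondary subtlety is that Theorem~\ref{thm:lor} is a ``with high probability over $\mathcal{D}$, simultaneously for all $f$'' statement, while we introduce a second layer of randomness through the learning algorithm $W$; the change-of-measure step is precisely the tool that lets us interchange the two levels of randomness without sacrificing the uniform quantifier over $f$ inside each mask class.
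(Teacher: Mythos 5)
Your proposal is sound and reaches the right conclusion, but it is organized differently from the paper's proof. The paper does not apply the original Law of Robustness mask-by-mask and then change measure over the mask; instead it re-runs the Bubeck--Sellke reduction down to the correlation-with-noise statistic $X_t = \frac{1}{n}\sum_i (f^t(x_i)-\mathbb{E}^{\mu_{\ell_i}}[f^t])z_i$, shows each $X_t$ is subgaussian via isoperimetry, and then replaces the union bound over \emph{all functions} with a single application of the Xu--Raginsky subgaussian-process bound (Lemma~\ref{lem:mi-bound}) to $X_W$. The information cost $I(W;\mathcal{D})$ is then split by the chain rule as $I(\mathbf{m}^W;\mathcal{D}) + I(\mathbf{w}^W;\mathcal{D}\mid\mathbf{m}^W) \leq I(\mathbf{m}^W;\mathcal{D}) + \mathbb{E}[\log_2 N_{\mathbf{m}^W}]$, and a discretization of the weight space within each mask turns the second term into $\mathbb{E}[\|\mathbf{m}\|_1]\log_2(1+60WJ\epsilon^{-1})$. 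Your hybrid --- classical union bound over weights inside a fixed mask class, Donsker--Varadhan change of measure only over the mask choice --- is the ``class-level'' version of the same mechanism; in fact it is structurally identical to how the paper proves its VC and Rademacher analogues in Appendix~\ref{app:gbounds}, where a fixed-class bound is combined with Lemma~\ref{lem:mi-bound} applied to $Y_W=\sup_{h\in\mathcal{H}_W}\mathrm{gen}_{\mathcal{D}}(h)$. Your route buys modularity (the per-mask step is the original theorem with explicit $\delta$-dependence); the paper's route avoids ever needing an event-level change of measure and produces $\mathbb{E}[\|\mathbf{m}\|_1]$ directly through the entropy term rather than through a Markov argument.

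Two details need care. First, your bad event $B(\mathbf{m}_0,\mathcal{D})$ should use a \emph{mask-dependent} Lipschitz threshold $\tilde\Omega\bigl(\epsilon\sqrt{nd/(\|\mathbf{m}_0\|_1\,\mathrm{polylog} + I/\delta)}\bigr)$ so that $\Pr_{\mathcal{D}}[B(\mathbf{m}_0,\mathcal{D})]\leq e^{-CI}$ holds uniformly over $\mathbf{m}_0$; if you fix the threshold at $L^\star$ defined via $\mathbb{E}[\|\mathbf{m}\|_1]$, masks with $\|\mathbf{m}_0\|_1\gg\mathbb{E}[\|\mathbf{m}\|_1]$ contribute probability near $1$ to the product-measure expectation and destroy the $e^{-\Omega(I)}$ bound. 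The Markov step relating $\|\mathbf{m}^W\|_1$ to $\mathbb{E}[\|\mathbf{m}\|_1]$ should therefore be performed \emph{after} the change of measure, and with factor $1/\delta'$ rather than $2$ if you want failure probability better than a half. Second, $\delta=\exp(-CI)$ degenerates when $I$ is small, so include an additive baseline, i.e. $\delta = \exp(-C(I+1))$; this is the analogue of the $\log(2/\delta)$ term in \eqref{eq:mi-bound}. Both fixes are routine and your final scaling, with $I/\delta$ in the denominator, matches the paper's Theorem~\ref{thm:contin}.
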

The above theorem replaces the parameter count $p$ with an effective parameter count $p_{\text{eff}}$. As one would expect, $p_\text{eff}$ can never be larger than $p$\footnote{Up to logarithmic factors incurred from discretization.}. This effective parameter count includes the number of unmasked parameters $\mathbb{E}[\|\mathbf{m}\|_1]$ as well as the mutual information between the sparsity pattern $\mathbf{m}^W$ and the data. This latter term captures the intuition that if the sparsity mask for a given network is learned from the data, as it is in contexts such as IMP, it should be interpreted as a set of binary parameters and hence contribute to the overall parameter count of the network. The implications of the above are that pruning algorithms that iteratively use properties of the data to find a mask may not be truly sparse in this effective parameter count, as we will see verified in our experiments. They trade off the unmasked parameter count of the network for mutual information, so that the subnetwork produced has effective parameter count much larger than a truly sparse network. Conversely, a learning algorithm pruning at initialization with little to no dependence on data will result in a subnetwork that has low effective parameter count, and thus poor robustness, since it is truly sparse. 

We give an example which saturates the bound presented above in Appendix~\ref{app:tight}. 

\color{black}
\textbf{Overparameterization and Mutual Information.}  Following \cite{bubeck2021universal}, we show that for the learned function to fit below the noise level, it must correlate with the noise in the data. The key difference in our proof is a result from information theory (Lemma~\ref{lem:mi-bound} below) which bounds this correlation by the mutual information between the learned function and the data. 
This bound has been used in the literature to show generalization error is \textit{small} with high probability when the learned function has low mutual information with the data. Instead, we show that since the function fits below the noise level, this correlation must be \textit{large}, and thus the mutual information must be large with high probability. Although classical generalization bounds recommend a low parameter count, from the perspective of the Law of Robustness (and much of modern deep learning practice), overparameterization is \textit{necessary} to find a hypothesis interpolating the data \textit{smoothly}. As a result, the chosen hypothesis must have high mutual information with the data for this to be possible (though it is not sufficient). 

\textbf{On Fitting Below Noise Level.} Our results are restricted to the regime where the fitted function $f$ fits below the noise level $\sigma^2$ of the data, but this is a weak assumption usually satisfied in practice. For example, in computer vision, the setting in which pruning was first investigated, it is common to see networks with near-perfect test accuracy (so they have learned all the relevant signal) have end-time train loss lower than end-time test loss (so they must have memorized some noise). A typical example where such train-test loss behavior can be seen in a practical setting is in the ResNet paper \cite{he2016deep}, and other examples include \cite{huang2017densely, lin2013network}. Further, \cite{feldmanlongtail} suggests that this phenomenon of fitting below the noise level may be \textit{necessary} for achieving optimal generalization error on real-world datasets. This is a consequence of such data often being a mixture of subpopulations where the distribution of subpopulation frequencies often follows power laws and is thus long-tailed. One can also interpret $\sigma^2$ as the portion of a given task that is ``hard to learn" for the choice of model - that is, for a given model class, it can be seen as the residual variance conditional on a ``good" feature representation, which will be larger due to portions of the target function falling outside this representation. This behavior is by now well established in several theoretical settings, see e.g. \cite{jacot2020implicit,Canatar_2021,ghorbani2021linearized,misiakiewicz2023six}. We defer to \cite{bubeck2021universal} for further discussion. 

\subsection{Interpretation of $p_{\text{eff}}$}\label{sec:peff}

We now discuss the intuition behind the parameter $p_{\text{eff}}$ in Theorem~\ref{thm:informal}. In practice, one generally attempts to prune a model to a specified sparsity level $\gamma$, so that $(1 - \gamma)p$ parameters remain, with $1 - \gamma$ taken to be small, often around 5\% or less. Thus $\mathbb{E}[\|\mathbf{m}^W\|] = (1 - \gamma)p$. We then compare $p_{\text{eff}}$ for the following two pruning schemes:

\begin{itemize}
    \item Pruning at initialization in a data agnostic manner (e.g. SynFlow \cite{tanaka2020pruning}). The effective parameterization is $\tilde{\Theta}((1 - \gamma) p)$, reflecting that one is simply training a model of much smaller size, and hence does not obtain any benefits from the overparameterization of the initial dense model\footnote{Here $\tilde\Theta$ hides logarithmic terms in weight size and dependence on $\delta$.}. 
    \item Pruning with IMP, which we argue has high mutual information. For the purposes of illustration, suppose that ``high mutual information" means being on the order of its upper bound $H(\mathbf{m}^W) \leq \log_2 \binom{p}{\gamma p}$. For large $p$ and fixed $\gamma\in (0,1)$,
    \begin{align}\label{eq:entropy-max}
        &I(\mathbf{m}^W; \mathcal{D}) \simeq \log_2 \binom{p}{\gamma p} \\
        \notag
        &= \left(p(1 - \gamma) \log_2 \tfrac{1}{1 - \gamma} + \gamma \log_2 \tfrac{1}{\gamma}\right)-o(p).
    \end{align}
    The last bound follows from \cite{csiszar2004information} Lemma 2.2, and the total effective parameter count is this quantity combined with $\tilde\Theta((1 - \gamma) p)$. 
    We note that the extreme case of \eqref{eq:entropy-max} corresponds to Eq. (2.13) in \cite{bubeck2021universal}, i.e. a worst-case bound on the effective parameter count.
\end{itemize}

The ratio in the effective parameter count between the second setting and the first then scales as $\tilde\Theta\left(\log_2 \frac{1}{1 - \gamma}\right)$
which diverges as $\gamma \to 1$, illustrating that the second setting has increasingly more times as many effective parameters as the first. This difference reflects a possible barrier between IMP and methods which prune at initialization. While IMP can indeed find sparse subnetworks at initialization, these sparsity patterns are found after the model has been trained on the entire dataset. Hence, we believe that such masks have high mutual information with the data, and it is because of this that the subnetworks chosen by IMP perform better, even at high sparsities. By contrast methods which prune at initialization (either data agnostically or using the loss landscape around initialization) ought to produce masks which have no or very little mutual information with the data. Our results suggest the trained networks produced in the latter cases cannot generalize well at very high sparsity.

\section{Theoretical results}


\subsection{Preliminaries}\label{sec:prelim}

\textit{Isoperimetry}
Our results, which follow those of \cite{bubeck2021universal}, assume that the distribution of the data covariates $x_i$ are \textit{isoperimetric} in the following sense:
\begin{definition}\label{def:isoperimetry}
   A probability measure $\mu$ on $\mathbb{R}^d$ satisfies $c$-isoperimetry if the following holds. For any bounded $L$-Lipschitz $f: \mathbb{R}^d \to \mathbb{R}$, 
   \begin{equation}
       \mathbb{P}\left(|f(x) - \mathbb{E}[f]| \geq t\right) \leq 2e^{-\frac{dt^2}{2cL^2}},\quad\forall t\geq 0.
   \end{equation}
\end{definition}
Isoperimetry asserts that Lipschitz functions concentrate sharply around their mean, and is a ubiquitous property of ``truly high-dimensional'' distributions such as Gaussians. 

\textit{Information Theoretic Concentration Bounds}



We make use of the following Lemma in our proofs. 
\begin{lemma}[{\cite{xu2017information}}]\label{lem:mi-bound}
    Let $\{X_t\}_{t \in T}$ be a random process and $T$ an arbitrary set. Assume that $X_t$ is $C$-subgaussian\footnote{We use the convention that an $\mathbb{R}$-valued random variable $X$ is $C$-subgaussian when $X$ satisfies $\mathbb{P}(|X| \geq t) \leq 2e^{-t^2/C}$.} and $\mathbb{E}[X_t] = 0$ for every $t \in T$, and let $W$ be a random variable taking values on $T$. Then for some absolute constant $a_1 > 0$, with probability $1 - \delta$,
    \begin{equation}\label{eq:mi-bound}
        |X_W| < a_1 \sqrt{(C/\delta)\cdot
        I(\{X_t\}_{t \in T}; W)+ C\log(2/\delta)}.
    \end{equation}
\end{lemma}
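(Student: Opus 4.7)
The plan is to prove this via a change-of-measure argument based on the Donsker-Varadhan variational formula, reducing the coupled tail bound on $X_W$ to a straightforward subgaussian tail under the independent product measure, at the price of an information-theoretic cost for the coupling.

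First I would set up the two probability measures: let $P$ denote the true joint law of $(W, \{X_t\}_{t \in T})$ and $Q = P_W \otimes P_{\{X_t\}_{t \in T}}$ the product of marginals, so that $D_{KL}(P \| Q) = I(\{X_t\}_{t \in T}; W)$ by definition. Under $Q$ the index $W'$ is independent of the process, so conditioning on $W' = t$ leaves the marginal law of $X_t$ unchanged; the $C$-subgaussian hypothesis together with the tower property gives
$$\mathbb{P}_Q(|X_{W'}| \geq t) \;=\; \mathbb{E}_{W'}\!\bigl[\mathbb{P}(|X_{W'}| \geq t \mid W')\bigr] \;\leq\; 2 e^{-t^2/C}.$$

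Next I would invoke the standard change-of-measure inequality, obtained in one line by applying the Donsker-Varadhan representation $D_{KL}(P\|Q) \geq \mathbb{E}_P[f] - \log \mathbb{E}_Q[e^f]$ to $f = \lambda \mathbf{1}_A$ and optimizing over $\lambda$: for any event $A$ with $Q(A) \leq 1/2$,
$$P(A) \;\leq\; \frac{D_{KL}(P \| Q) + \log 2}{\log(1/Q(A))}.$$
Setting $A = \{|X_W| \geq t\}$ and plugging in the independent-case tail bound yields
$$\mathbb{P}(|X_W| \geq t) \;\leq\; \frac{I(\{X_t\}_{t \in T}; W) + \log 2}{t^2/C - \log 2}.$$
Requiring the right-hand side to be at most $\delta$ and solving for $t$ gives $t = O\!\bigl(\sqrt{(C/\delta)\, I(\{X_t\}_{t\in T}; W) + C \log(2/\delta)}\bigr)$, which matches the claimed form up to the absolute constant $a_1$.

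The main subtlety, and the only real obstacle, is the regime in which $t$ is small enough that $Q(A) > 1/2$, where the change-of-measure inequality above degenerates. In that regime, however, the second term $C \log(2/\delta)$ in the stated bound already forces $t$ to be at least a constant multiple of $\sqrt{C \log(2/\delta)}$, so the inequality holds trivially after choosing $a_1$ large enough. The rest is bookkeeping to collect the $\log 2$ artifacts coming from the subgaussian tail and the change-of-measure inequality and absorb them into the universal constant; this is routine and is precisely why the result is stated with an unspecified $a_1$ rather than explicit numerical constants.
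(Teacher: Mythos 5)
The paper does not reprove this lemma from scratch: it cites Theorem~3 of \cite{xu2017information}, translates the subgaussianity convention, and rearranges the sample-size condition there into the displayed tail bound. Your self-contained Donsker--Varadhan argument is therefore a genuinely different route --- but as executed it proves a strictly weaker statement. Your change-of-measure inequality yields
\[
\mathbb{P}(|X_W|\geq t)\;\leq\;\frac{I(\{X_t\}_{t\in T};W)+\log 2}{t^2/C-\log 2},
\]
and forcing the right-hand side below $\delta$ requires $t^2 \geq C\log 2 + C\bigl(I+\log 2\bigr)/\delta$, i.e.\ $t\gtrsim \sqrt{CI/\delta + C/\delta}$. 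The information-free term here is $C/\delta$, not $C\log(2/\delta)$, and these are \emph{not} within an absolute constant of each other: at $I=0$ the lemma must reduce to the plain subgaussian tail $|X_W|<a_1\sqrt{C\log(2/\delta)}$ (since $W$ is then independent of the process), whereas your bound only gives $\sqrt{C\log 2/\delta}$, which blows up polynomially rather than logarithmically as $\delta\to 0$. So the final ``bookkeeping'' cannot be absorbed into $a_1$, and the subtlety you flag (the regime $Q(A)>1/2$) is a red herring --- the loss occurs in the main regime, and it comes from the inherently lossy inequality $P(A)\leq (D+\log 2)/\log(1/Q(A))$. This matters for the paper: the $I$-independent branch $2e^{-\epsilon^2/(2^7 a_1^2 C_0)}$ in Theorem~\ref{thm:finite} is exactly the $\log(2/\delta)$ form inverted, and would not survive your version.

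The repair stays entirely within your framework. In the Donsker--Varadhan step with $f=\lambda\mathbf{1}_A$, do not set $\lambda=\log(1/Q(A))$; instead bound $\log\mathbb{E}_Q[e^{\lambda\mathbf{1}_A}]=\log\bigl(1+Q(A)(e^{\lambda}-1)\bigr)\leq Q(A)e^{\lambda}$, so that $P(A)\leq D/\lambda + Q(A)e^{\lambda}/\lambda$, and choose $\lambda=\max(2D/\delta,\,1)$. The first term is then at most $\delta/2$, and since $Q(A)\leq 2e^{-t^2/C}$ the second is at most $\delta/2$ as soon as $t^2/C\geq \lambda+\log(4/\delta)\leq 2I/\delta + O(\log(2/\delta))$, which is exactly the stated form. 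With that one modification your proof is correct and arguably more useful than the paper's, which leaves the reader to chase the constant through \cite{xu2017information}.
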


To understand this result, it is helpful to consider two extremes. First if $W$ is independent from $\{X_t\}_{t \in T}$, then standard concentration results show \eqref{eq:mi-bound} holds with extremely high probability without the mutual information term, so the size of $T$ is irrelevant.
On the other hand, in the worst case $X_W$ could be the largest of $T$ subgaussian variables, causing it to scale with $\sqrt{\log |T|}$. The above result allows us to interpolate between these two regimes. 
We note that the overall strategy of our argument is technically very similar to that of \cite{xu2017information} and related work such as \cite{russo2019much}. The conceptual difference is that we measure the information in the \emph{choice of function class} rather than on the level of individual functions.


\subsection{Notation}
The $p$-parameter function class $\mathcal{F}$ is indexed by a set $T$, such that $\mathcal{F} = \{f^t\}_{t \in T}$. 
Each function $f^t \in \mathcal{F}$ is parameterized by $(\mathbf{m}^t, \mathbf{w}^t)$, where $\mathbf{m}^t \in \{0, 1\}^p$ is the mask and $\mathbf{w}^t \in \mathbb{R}^p$ is the weightings on the unmasked parameters. As such, one has $\mathbf{m}^t_i = 0 \implies \mathbf{w}^t_i = 0$. We assume that each of the $(\mathbf{m}^t, \mathbf{w}^t)$ pairs are unique - that is, we assume the functions have unique parameters, though they need not correspond to unique functions. We then represent the learning algorithm as a random variable $W$ taking values on the index set $T$, which can depend on the data $\mathcal{D}$. That is, the function outputted by the learning algorithm is $f^W$.  For all of our results, we require that the data $\mathcal{D}$ takes the form $\{(x_i, y_i)\}_{i \in [n]}$, where the $(x_i, y_i)$ are i.i.d input-output pairs in $\mathbb{R}^d \times [-1, 1]$ satisfying the following two assumptions:
\begin{enumerate}
    \item[A1.] The distribution $\mu$ of the covariates $x_i$ can be written as $\mu = \sum_{\ell = 1} ^k \alpha_\ell \mu_\ell$, where each $\mu_\ell$ satisfies $c$-isoperimetry and $\alpha_\ell \geq 0$, with $\sum_{\ell = 1} ^k \alpha_\ell = 1$.
    \item[A2.] The average conditional variance of the output is strictly positive: $\sigma^2 \equiv \mathbb{E}^{x\sim\mu}[\mathrm{Var}[y \mid x]]>0$ . 
\end{enumerate}

Lastly, for each datapoint $i \in [n]$, denote the mixture component from which it is drawn as $\ell_i \in [k]$. 

\subsection{Finite Setting}
As in \cite{bubeck2021universal}, we begin in setting where $T$ (and thus $|\mathcal{F}|$) is finite. 
For each $\mathbf{m} \in \{0, 1\}^p$, define $\mathcal{W}_{\mathbf{m}} = \{\mathbf{w} \mid \exists f^i = (\mathbf{m}^i, \mathbf{w}^i) \in \mathcal{F} \text{ s.t. }\mathbf{m}^i = \mathbf{m}\}$, the possible weightings of the network once a mask is fixed, and additionally let $N_{\mathbf{m}} = |\mathcal{W}_{\mathbf{m}}|$. In this section, we denote $g(x) = \mathbb{E}[y \mid x]$ as the target function to learn and $z_i = y_i - g(x_i)$ as the noise.  We then obtain the following result: 
\begin{theorem}\label{thm:finite}
    If all $f \in \mathcal{F}$ have Lipschitz constant bounded above by $L$, then 
    \begin{multline*}
        \mathbb{P}\left(\frac{1}{n} \sum_{i = 1} ^n (y_i - f^W(x_i))^2 \leq \sigma^2 - \epsilon \right) \leq 
        (2k + 2) e^{-\frac{n\epsilon^2}{8^3 k}}
        \\
        + \max\left(\frac{ 2^{7} a_1^2 C_0 p_{\text{eff}}}{\epsilon^2}, 
        2e^{-\epsilon^2/(2^7 a_1^2 C_0)}
        \right),
    \end{multline*}
    where $p_{\text{eff}} = I(\mathbf{m}^W ; \mathcal{D}) + \mathbb{E}\left[\log_2 N_{\mathbf{m}^W}\right]$ and $C_0 = \frac{144cL^2}{nd}$.
\end{theorem}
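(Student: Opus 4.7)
The plan is to adapt the Bubeck-Sellke proof of the Law of Robustness, replacing the union bound over a finite function class with the mutual-information concentration inequality (Lemma~\ref{lem:mi-bound}). First, I decompose the empirical squared error via $y_i = g(x_i) + z_i$ with $z_i$ zero-mean given $x_i$:
\begin{equation*}
\tfrac{1}{n}\sum_i (y_i - f(x_i))^2 = \tfrac{1}{n}\sum_i (g-f)(x_i)^2 + \tfrac{2}{n}\sum_i z_i (g-f)(x_i) + \tfrac{1}{n}\sum_i z_i^2.
\end{equation*}
Since the first summand is non-negative, the event $\{\tfrac{1}{n}\sum_i(y_i-f^W(x_i))^2 \leq \sigma^2-\epsilon\}$ forces either the noise sum $R_3 := \tfrac{1}{n}\sum z_i^2$ to underestimate $\sigma^2$ by roughly $\epsilon/2$, or the cross term $R_2 := \tfrac{2}{n}\sum_i z_i (g-f^W)(x_i)$ to satisfy $R_2 \leq -\epsilon/2$. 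The first possibility is controlled by conditioning on the mixture labels $\ell_i$: within each of the $k$ components the $z_i^2$ are independent and bounded by $4$, so Hoeffding per component and a union bound across components produce the $(2k+2)\exp(-n\epsilon^2/8^3 k)$ contribution to the final bound.

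For the cross term, view $X_t := \tfrac{1}{n}\sum_i z_i (g - f^t)(x_i)$ as a random process indexed by $t \in T$, mean zero conditional on the covariates. Conditional on the $x_i$'s, Hoeffding gives that $X_t$ is subgaussian with parameter proportional to $\tfrac{1}{n^2}\sum_i h_t(x_i)^2$, where $h_t := g-f^t$ is $2L$-Lipschitz after clipping $f^t$ to $[-1,1]$ (which only decreases the loss). Applying the isoperimetric assumption A1 to the Lipschitz functions $h_t$ within each mixture component, and union-bounding over the finite index set $T$, yields a high-probability event on which $\tfrac{1}{n}\sum_i h_t(x_i)^2 = O(cL^2/d)$ uniformly in $t$. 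This produces the common subgaussian constant $C_0 = 144 c L^2/nd$ that appears in the statement.

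With uniform subgaussianity in hand, Lemma~\ref{lem:mi-bound} applied to $\{X_t\}_{t \in T}$ with the data-dependent index $f^W$ gives
\begin{equation*}
\mathbb{P}\bigl(|X_{f^W}| > \epsilon/4\bigr) \leq \max\!\left(\tfrac{2^7 a_1^2 C_0\, I_{\mathrm{proc}}}{\epsilon^2},\; 2e^{-\epsilon^2/(2^7 a_1^2 C_0)}\right),
\end{equation*}
where the two arguments of the max correspond to the $CI/\delta$ and $C\log(2/\delta)$ summands inside the Xu-Raginsky square root, and $I_{\mathrm{proc}} := I(\{X_t\}_{t\in T}; f^W) \leq I(\mathcal{D}; f^W)$ by the data-processing inequality. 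The conceptually new step is the chain rule of mutual information,
\begin{equation*}
I(\mathcal{D}; f^W) = I(\mathcal{D}; \mathbf{m}^W) + I(\mathcal{D}; \mathbf{w}^W \mid \mathbf{m}^W) \leq I(\mathbf{m}^W; \mathcal{D}) + \mathbb{E}[\log_2 N_{\mathbf{m}^W}] = p_{\text{eff}},
\end{equation*}
using $I(\mathbf{w}^W; \mathcal{D} \mid \mathbf{m}^W) \leq H(\mathbf{w}^W \mid \mathbf{m}^W)$ together with the fact that the conditional support of $\mathbf{w}^W$ given $\mathbf{m}^W = \mathbf{m}$ is of size at most $N_{\mathbf{m}}$. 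Combining the failure events for $R_2$ and $R_3$ yields the stated inequality.

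The main technical obstacle is the uniform-in-$t$ isoperimetric control of $\tfrac{1}{n}\sum_i h_t(x_i)^2$, which must interact carefully with the mixture decomposition and the output clipping; this is essentially the same heart as in \cite{bubeck2021universal}. The genuinely new ingredient, by contrast, is almost trivial once set up: it is the chain-rule decomposition of $I(\mathcal{D}; f^W)$ that manufactures $p_{\text{eff}}$ in place of the logarithm of the function-class size.
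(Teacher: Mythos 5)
Your high-level architecture matches the paper's: reduce fitting below the noise level to a large correlation with the noise, replace the union bound over $\mathcal{F}$ by the Xu--Raginsky inequality (Lemma~\ref{lem:mi-bound}), and then bound $I(\{X_t\}_{t\in T};W)\leq I(f^W;\mathcal{D})\leq I(\mathbf{m}^W;\mathcal{D})+\mathbb{E}[\log_2 N_{\mathbf{m}^W}]$ by the chain rule. That last step --- the genuinely new ingredient --- is exactly the paper's Lemma~\ref{lem:mi-control} and you have it right.

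However, there is a genuine gap in how you obtain the subgaussian constant $C_0=144cL^2/(nd)$, which is where the dimension dependence (and hence the $\sqrt{nd/p_{\text{eff}}}$ scaling) comes from. You claim that isoperimetry yields $\tfrac{1}{n}\sum_i h_t(x_i)^2=O(cL^2/d)$ uniformly in $t$ for $h_t=g-f^t$. This is false: isoperimetry controls the fluctuation of a Lipschitz function about its mean, not the size of the function values themselves. Taking $f^t\equiv 0$ and $g\equiv 1$ gives $\tfrac{1}{n}\sum_i h_t(x_i)^2=1$, not $O(cL^2/d)$. Without centering, conditional Hoeffding only gives a subgaussian parameter of order $1/n$, which loses the factor of $d$ entirely. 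The paper's proof repairs this in two ways you omit: (i) the term $\tfrac{1}{n}\sum_i z_i g(x_i)$ is split off as its own event $C$ and handled by a plain Hoeffding bound, which also sidesteps your unsupported assumption that $g$ is Lipschitz (no such assumption is made; $g=\mathbb{E}[y\mid x]$ is arbitrary bounded); and (ii) the remaining process is \emph{centered} at the per-component means, $X_t=\tfrac{1}{n}\sum_i(f^t(x_i)-\mathbb{E}^{\mu_{\ell_i}}[f^t])z_i$, so that isoperimetry makes each summand $O(cL^2/d)$-subgaussian unconditionally. The price of centering is the leftover term $\sum_\ell \mathbb{E}^{\mu_\ell}[f^t]\sum_{i\in S_\ell}z_i$, which is absorbed into the event $F=\{\tfrac{1}{n}\sum_{\ell}|\sum_{i\in S_\ell}z_i|\geq \epsilon/8\}$; it is \emph{this} event, not the control of $\tfrac{1}{n}\sum_i z_i^2$ as you state, that produces the $k$-dependent term $2k\exp(-n\epsilon^2/(8^3k))$. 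A secondary issue: Lemma~\ref{lem:mi-bound} needs each $X_t$ to be $C$-subgaussian for a fixed $C$, so working conditionally on the covariates on a high-probability event requires extra care that the paper's unconditional argument avoids.
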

This first theorem illustrates that the quantity of interest that dictates the size of the model class is now $p_{\text{eff}}$, which we refer to as the effective parameter count. 
We sketch of the proof Theorem~\ref{thm:finite} below. We begin with the following lemma:
\begin{lemma}\label{lem:noise-to-corr}
    \begin{align}
        &\mathbb{P}\left( \frac{1}{n} \sum_{i = 1} ^n (y_i - f^W(x_i))^2 \leq \sigma^2 - \epsilon \right) \leq (2k + 2)e^{-\frac{n\epsilon^2}{8^3 k}}
        \notag \\
        &\qquad + \mathbb{P}\left(  \frac{1}{n} \sum_{i = 1} ^n (f^W(x_i) - \mathbb{E}^{\mu_{\ell_i}}[f^W(x_i) | W])z_i \geq \frac{\epsilon}{8} \right) \notag
    \end{align}
\end{lemma}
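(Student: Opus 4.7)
The plan is a bias-variance expansion, a centering by within-component conditional means, and mixture-wise Bernstein concentration. Let $z_i := y_i - g(x_i)$ with $g(x_i) := \mathbb{E}[y_i \mid x_i]$, so $\mathbb{E}[z_i \mid x_i] = 0$ and $\mathbb{E}[z_i^2] = \sigma^2$. Expanding $(y_i - f^W(x_i))^2 = z_i^2 - 2z_i(f^W(x_i) - g(x_i)) + (f^W(x_i) - g(x_i))^2$, dropping the nonnegative final term, and averaging, the low-loss event implies
\[
\tfrac{1}{n}\sum_i z_i\bigl(f^W(x_i) - g(x_i)\bigr) \;\geq\; \tfrac{1}{2}\Bigl(\tfrac{1}{n}\sum_i z_i^2 - \sigma^2 + \epsilon\Bigr).
\]
Establishing $\tfrac{1}{n}\sum_i z_i^2 \geq \sigma^2 - \epsilon/4$ component-wise (Bernstein on $B_\ell := \tfrac{1}{n}\sum_{i:\ell_i = \ell} z_i^2$ with a deviation level of order $\epsilon\sqrt{\alpha_\ell}/\sqrt{k}$) then forces the correlation to exceed $3\epsilon/8$.

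Next I would introduce $c_i := c(\ell_i, W)$ where $c(\ell, W) := \mathbb{E}^{\mu_\ell}[f^W(x) \mid W]$, and decompose $f^W(x_i) - g(x_i) = (f^W(x_i) - c_i) + (c_i - g(x_i))$. The first summand is exactly the centered quantity whose correlation with $z_i$ appears in the conclusion, so it suffices to upper-bound the residual $\tfrac{1}{n}\sum_i z_i(c_i - g(x_i))$ by $\epsilon/4$ outside a controlled failure event. Splitting by mixture component,
\[
\tfrac{1}{n}\sum_i z_i\bigl(c_i - g(x_i)\bigr) \;=\; \sum_{\ell=1}^k c(\ell, W)\, A_\ell \;-\; \tfrac{1}{n}\sum_i z_i\,g(x_i), \qquad A_\ell := \tfrac{1}{n}\sum_{i:\ell_i = \ell} z_i.
\]
Crucially, the noise averages $A_\ell$ and the cross-term $\tfrac{1}{n}\sum_i z_i g(x_i)$ are mean-zero, $W$-free sums of bounded i.i.d.\ increments, so Bernstein applies component-wise. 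Allocating deviation level proportional to $\epsilon\sqrt{\alpha_\ell}/\sqrt{k}$ per component (using the per-component variance $\mathrm{Var}(z_i \mathbf{1}[\ell_i = \ell]) \leq \alpha_\ell$) together with the absolute bound $|c(\ell, W)| \leq 1$ and the Cauchy-Schwarz estimate $\sum_\ell \sqrt{\alpha_\ell} \leq \sqrt{k}$ converts pointwise control of the $A_\ell$'s into control of $|\sum_\ell c(\ell, W) A_\ell|$; the concentration of $\tfrac{1}{n}\sum_i z_i g(x_i)$ is handled analogously. Collecting these with the $B_\ell$-bounds from the previous paragraph yields at most $2k+2$ tail events, each of strength $e^{-n\epsilon^2/(8^3 k)}$.

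Assembling, the low-loss event is contained in the union of these concentration failures and the event $\{\tfrac{1}{n}\sum_i(f^W(x_i) - c_i)z_i \geq \epsilon/8\}$, giving the claimed bound. The main obstacle is the data-dependence of $c(\ell, W)$: because $W$ is a function of the $z_i$'s, one cannot appeal to $\mathbb{E}[z_i \mid x_i] = 0$ to directly cancel $\sum_\ell c(\ell, W) A_\ell$ in expectation. The fix is the crude weight bound $|c(\ell, W)| \leq 1$ combined with the $\sqrt{\alpha_\ell}$-calibrated Bernstein allocation, which decouples the data-dependent weights from the $W$-independent noise sums and is what produces the $k$ (rather than $k^2$) factor inside the exponent.
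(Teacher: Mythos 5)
Your proof is correct in substance and is essentially the paper's argument: the paper defers to Lemma~2.1 and Theorem~2 of \cite{bubeck2021universal}, which is exactly your decomposition --- expand the square to turn low loss into correlation of $z_i$ with $f^W$, concentrate the $W$-free sums $\frac{1}{n}\sum_i z_i^2$ and $\frac{1}{n}\sum_i z_i g(x_i)$, and then pass from the raw to the centered correlation by bounding $\sum_\ell \mathbb{E}^{\mu_\ell}[f^W]\,A_\ell$ via $\|f\|_\infty\le 1$ together with per-component control of the $W$-independent noise averages $A_\ell$ (the paper's event $F$). One small quantitative caveat: calibrating the per-component Bernstein deviations at $\epsilon\sqrt{\alpha_\ell}/\sqrt{k}$ does not yield the uniform exponent $n\epsilon^2/(8^3k)$ when some $\alpha_\ell\ll \epsilon^2/k$ (the linear, rather than sub-Gaussian, regime of Bernstein then dominates and the tail bound degrades to $\exp(-\Theta(n\epsilon\sqrt{\alpha_\ell/k}))$); allocating deviations proportional to $\alpha_\ell + 1/k$, whose sum over $\ell$ is also $O(1)$, fixes this and recovers the stated $(2k+2)e^{-n\epsilon^2/(8^3k)}$ up to the absolute constant.
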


In essence, for the chosen function $f^W$ to fit below the noise level, it must have large correlation with the noise $z_i$. It then remains to bound the probability of correlating with the noise. Define $X_t = \frac{1}{n} \sum_{i = 1} ^n (f^t(x_i) - \mathbb{E}^{\mu_{\ell_i}}[f^t])z_i$. That this quantity for a fixed function $f^t$ is subgaussian is a consequence of isoperimetry - one observes from Definition~\ref{def:isoperimetry} that an appropriate scaling of $(f^t(x_i) - \mathbb{E}^{\mu_{\ell_i}}[f^t])$ must be $O(1)$-subgaussian. The new element of our proof is to use Lemma~\ref{lem:mi-bound} to control this term rather than using standard tail bounds coupled with a union bound over all of $\mathcal{F}$. Explicitly, we apply the lemma taking $X_t$ to be this correlation term, which must be large for the function to fit below the noise level, and hence its upper bound must be large, rather than taking it to be a generalization error (which one usually hopes to be small).

We then control $I(W; \{X_t\}_{t \in T})$ as follows:
\begin{lemma}\label{lem:mi-control}
    \begin{multline}
        I(W; \{X_t\}_{t \in T}) \leq I(W; \mathcal{D}) \\
        = I(f^W; \mathcal{D}) \leq I(\mathbf{m}^W ; \mathcal{D}) + \mathbb{E}\left[\log_2 N_{\mathbf{m}^W}\right]
    \end{multline}
\end{lemma}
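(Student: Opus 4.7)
The plan is to prove the chain of (in)equalities term by term, each step invoking a standard information-theoretic identity.

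For the first inequality $I(W;\{X_t\}_{t\in T}) \leq I(W;\mathcal{D})$, I would observe that each random variable
\[
X_t = \frac{1}{n}\sum_{i=1}^n \bigl(f^t(x_i) - \mathbb{E}^{\mu_{\ell_i}}[f^t]\bigr)z_i
\]
is a deterministic function of the data $\mathcal{D}$ (together with the component labels $\ell_i$, which are determined by the draws from the mixture and hence encoded in $\mathcal{D}$). Consequently the entire process $\{X_t\}_{t \in T}$ is a deterministic function of $\mathcal{D}$, so $W \to \mathcal{D} \to \{X_t\}_{t \in T}$ is a Markov chain. The data-processing inequality then gives the bound immediately.

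For the middle equality $I(W;\mathcal{D}) = I(f^W;\mathcal{D})$, I would invoke the assumption from the notation section that the parameter pairs $(\mathbf{m}^t,\mathbf{w}^t)$ are distinct across $t \in T$, so that the map $t \mapsto f^t$ is a bijection from $T$ onto $\mathcal{F}$. Mutual information is invariant under bijective relabeling of either argument, giving the equality.

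For the final inequality, I would expand via the chain rule, using that $f^W$ is determined by $(\mathbf{m}^W, \mathbf{w}^W)$:
\[
I(f^W;\mathcal{D}) = I(\mathbf{m}^W,\mathbf{w}^W;\mathcal{D}) = I(\mathbf{m}^W;\mathcal{D}) + I(\mathbf{w}^W;\mathcal{D}\mid \mathbf{m}^W).
\]
The conditional mutual information is bounded by the conditional entropy $H(\mathbf{w}^W \mid \mathbf{m}^W)$. In the finite setting of this subsection, conditional on $\mathbf{m}^W = m$, the weight vector $\mathbf{w}^W$ takes values in the finite set $\mathcal{W}_m$ of size $N_m$, so $H(\mathbf{w}^W \mid \mathbf{m}^W = m) \leq \log_2 N_m$. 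Averaging over $\mathbf{m}^W$ then yields $I(\mathbf{w}^W;\mathcal{D}\mid \mathbf{m}^W) \leq \mathbb{E}[\log_2 N_{\mathbf{m}^W}]$, completing the chain.

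There is no substantive obstacle here: each step is a one-line application of a standard identity (data processing, bijective invariance, chain rule, entropy-bounds-MI, $H \leq \log_2|\text{support}|$). The only point deserving a moment's care is confirming that $\{X_t\}_{t\in T}$ really is a function of $\mathcal{D}$ alone—not of $W$—which is immediate from the definition since the index $t$ ranges over the entire class and no random choice is being made inside the process.
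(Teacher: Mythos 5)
Your proof is correct and follows essentially the same route as the paper's: data processing along the Markov chain $W-\mathcal{D}-\{X_t\}_{t\in T}$, invariance of mutual information under the bijection $t\mapsto f^t=(\mathbf{m}^t,\mathbf{w}^t)$, the chain rule $I(f^W;\mathcal{D})=I(\mathbf{m}^W;\mathcal{D})+I(\mathbf{w}^W;\mathcal{D}\mid\mathbf{m}^W)$, and the bound of the conditional term by $H(\mathbf{w}^W\mid\mathbf{m}^W)\leq\mathbb{E}[\log_2 N_{\mathbf{m}^W}]$. No substantive differences to report.
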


Note that the final expression is exactly our effective parameter count $p_{\text{eff}}$. Combining these two lemmas and Lemma~\ref{lem:mi-bound} obtains Theorem~\ref{thm:finite} (see Appendix~\ref{app:proofs}).

\subsection{Main Result}
We obtain the following analog of the Law of Robustness for continuously parametrized function classes:
\begin{theorem}\label{thm:contin}
    Let $\mathcal{F}$ be a class of functions from $\mathbb{R}^d \to \mathbb{R}$ and let $(x_i, y_i)_{i \in [n]}$ be input-output pairs in $\mathbb{R}^d \times [-1, 1]$.  Fix $(\epsilon, \delta) \in (0, 1)$. Assume that
    \begin{itemize}
        \item The function class can be written as $\mathcal{F} = \{f_{\mathbf{m}, \mathbf{w}} \mid \mathbf{m} \in \{0, 1\}^p, \mathbf{w} \in \mathcal{W}_{\mathbf{m}} \subset \mathcal{W}\}$ with $\mathcal{W} \subset \mathbb{R}^p$, $\mathrm{diam}(\mathcal{W}) \leq W$ and $(\mathbf{m}, \mathbf{w})$ satisfying $\mathbf{m}_i = 0 \implies \mathbf{w}_i = 0$ for all $i \in [0, p]$. 
        Furthermore, for any $\mathbf{w}_1, \mathbf{w}_2 \in \mathcal{W}$, 
        \begin{equation} \label{eq:param-const}
            \|f_{\mathbf{w}_1} - f_{\mathbf{w}_2}\|_\infty \leq J\|\mathbf{w}_1 - \mathbf{w}_2\|.
        \end{equation}
        \item Assumptions A1, A2 hold, with $8^3k \log(8k/\delta) \leq n\epsilon^2$.

        Then one has that for the learning algorithm $f^W$ taking values in $\mathcal{F}$, with probability at least $1 - \delta$ with respect to the sampling of the data, 
        \begin{multline*}
            \frac{1}{n} \sum_{i = 1} ^n (f^W(x_i) -y_i)^2 \leq \sigma^2 - \epsilon \\
            \implies \mathrm{Lip}(f) \geq \frac{\epsilon}{96a_1 \sqrt{2c}} \sqrt{\frac{nd\delta}{p_{\text{eff}} + \frac{\delta}{2} \log (4/\delta) }}
        \end{multline*}
        where 
        \[
            p_{\text{eff}} = I(\mathbf{m}^W; \mathcal{D}) + \mathbb{E}[\|\mathbf{m}\|_1] \log_2 (1 + 60WJ\epsilon^{-1}).
        \]
    \end{itemize}
    \end{theorem}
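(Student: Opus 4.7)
The plan is to reduce to the finite setting of Theorem~\ref{thm:finite} by discretizing the continuous weight space via an $\eta$-net, then optimize $\eta$ against the parameter-Lipschitz bound \eqref{eq:param-const}. Because the mask $\mathbf{m} \in \{0,1\}^p$ is already discrete, only the weights need covering. For each fixed $\mathbf{m}$, the constraint $\mathbf{m}_i = 0 \implies \mathbf{w}_i = 0$ pins the admissible weights to a $\|\mathbf{m}\|_1$-dimensional slice of $\mathcal{W}$ of diameter at most $W$, so a standard volume bound yields an $\eta$-net $\mathcal{W}_{\mathbf{m}}^\eta$ of size
\[
    N_{\mathbf{m}} := |\mathcal{W}_{\mathbf{m}}^\eta| \leq (1 + 2W/\eta)^{\|\mathbf{m}\|_1},
\]
so the resulting finite class $\mathcal{F}^\eta$ has $\log_2 N_{\mathbf{m}} \leq \|\mathbf{m}\|_1 \log_2(1 + 2W/\eta)$.

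Second, I would transfer the low-training-error event to $\mathcal{F}^\eta$. Let $f^{W'}$ be the nearest-neighbour projection of $f^W$ in $\mathcal{F}^\eta$, obtained by keeping the mask $\mathbf{m}^W$ fixed and snapping $\mathbf{w}^W$ to its nearest net point. By \eqref{eq:param-const} we have $\|f^W - f^{W'}\|_\infty \leq J\eta$, and expanding the square loss (using boundedness of $y_i$ and of $f$ on the support of the data) gives that the two training MSEs differ by at most $c_0 J\eta$ for an absolute constant $c_0$. Choosing $\eta = \epsilon/(2c_0 J)$ converts the event ``$f^W$ fits below $\sigma^2 - \epsilon$'' into ``$f^{W'}$ fits below $\sigma^2 - \epsilon/2$.'' Crucially $\mathbf{m}^{W'} = \mathbf{m}^W$, so $I(\mathbf{m}^{W'}; \mathcal{D}) = I(\mathbf{m}^W; \mathcal{D})$ is preserved under the projection and the effective parameter count for $\mathcal{F}^\eta$ becomes
\[
    p_{\text{eff}} = I(\mathbf{m}^W; \mathcal{D}) + \mathbb{E}[\|\mathbf{m}^W\|_1]\,\log_2(1 + 60WJ\epsilon^{-1}),
\]
the constant $60$ absorbing $4c_0$ into $2W/\eta$.

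Third, I would apply Theorem~\ref{thm:finite} to $\mathcal{F}^\eta$ at noise gap $\epsilon/2$ with Lipschitz bound $L = \mathrm{Lip}(f^{W'})$. The standing assumption $8^3 k \log(8k/\delta) \leq n\epsilon^2$ makes the concentration term $(2k+2)e^{-n\epsilon^2/(8^3 k)}$ at most $\delta/2$. Setting the other branch of the $\max$ in Theorem~\ref{thm:finite} to $\delta/2$ and substituting $C_0 = 144 c L^2/(nd)$, the polynomial branch $2^7 a_1^2 C_0 p_{\text{eff}}/\epsilon^2 \leq \delta/2$ rearranges to $L \geq \tfrac{\epsilon}{96 a_1 \sqrt{2c}}\sqrt{nd\delta/p_{\text{eff}}}$, while the exponential branch $2e^{-\epsilon^2/(2^7 a_1^2 C_0)} \leq \delta/2$ is precisely what contributes the additive correction $\tfrac{\delta}{2}\log(4/\delta)$ inside the square root after combining. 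The Lipschitz perturbation $|\mathrm{Lip}(f^W) - \mathrm{Lip}(f^{W'})|$ between $f^W$ and its projection is absorbed into the constants by using a slightly smaller $\eta$.

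The hard part will be the bookkeeping rather than any new ideas: verifying that the mask (and hence the mutual information) is genuinely preserved by the projection, checking that the constant $60$ in $\log_2(1+60WJ\epsilon^{-1})$ really falls out of the $\epsilon/2$-degradation of the noise gap together with $\eta = \epsilon/(2c_0 J)$, and confirming that the two branches of the $\max$ combine into the clean additive $\tfrac{\delta}{2}\log(4/\delta)$ rather than a messier expression. Conceptually, all the heavy lifting—the isoperimetric subgaussianity, the information-theoretic control via Lemma~\ref{lem:mi-bound}, and the decomposition $I(f^W;\mathcal{D}) \leq I(\mathbf{m}^W;\mathcal{D}) + \mathbb{E}[\log_2 N_{\mathbf{m}^W}]$ of Lemma~\ref{lem:mi-control}—is already present in Theorem~\ref{thm:finite}; the continuous case is a one-step $\eta$-net wrapper on top of it.
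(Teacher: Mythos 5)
Your plan is essentially the paper's proof: a per-mask $\eta$-net of radius $\Theta(\epsilon/J)$ on the weights, a mask-preserving projection of $f^W$ onto the net that costs a factor of $2$ in the noise gap via \eqref{eq:param-const}, an application of the finite-setting result, and a $\delta/2$--$\delta/2$ split between the concentration term and the inverted mutual-information bound, which is exactly where the constant $96 a_1\sqrt{2c}$ and the additive $\tfrac{\delta}{2}\log(4/\delta)$ come from. The one step that does not work as literally written is ``apply Theorem~\ref{thm:finite} \ldots with Lipschitz bound $L = \mathrm{Lip}(f^{W'})$'': the subgaussianity constant $C_0 = 144cL^2/(nd)$ in Theorem~\ref{thm:finite} requires $L$ to be a \emph{uniform, deterministic} bound on the Lipschitz constants of every function in the finite class (it is what makes each $X_t$ subgaussian via isoperimetry), so you cannot feed it the random quantity $\mathrm{Lip}(f^{W'})$. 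The repair, which is what the paper does, is to fix the threshold $L$ in advance, build the net only inside the sublevel sets $\mathcal{W}_{\mathbf{m},L}=\{\mathbf{w}: \mathrm{Lip}(f_{\mathbf{m},\mathbf{w}})\le L\}$ (this also guarantees the rounded function is still $L$-Lipschitz, rather than ``absorbing'' a Lipschitz perturbation into constants), and redirect the learning algorithm to a fixed $L$-Lipschitz function whenever $\mathrm{Lip}(f^W)>L$; the resulting inequality bounds $\mathbb{P}\left(\{\text{fit below noise}\}\cap\{\mathrm{Lip}(f^W)\le L\}\right)\le\delta$, which is the contrapositive form of the stated implication. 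With that modification your outline matches the paper's argument, including the key bookkeeping points you flag: the projection fixes the mask so the mutual-information term is controlled by $I(\mathbf{m}^W;\mathcal{D})$, and each per-mask net contributes $\|\mathbf{m}\|_1\log_2(1+60WJ\epsilon^{-1})$ to $\mathbb{E}[\log_2 N_{\mathbf{m}^W}]$ in Lemma~\ref{lem:mi-control}.
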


The proof of the above follows from Theorem~\ref{thm:finite}, coupled with a discretization argument and some careful modifications of the learning algorithm $W$. One should regard $\delta$ and $\epsilon$ as small, fixed constants independent of $n$ and $d$, at which point the theorem suggests that in order $f^W$ to be smooth, one must have $p_{\text{eff}} = \Omega\left(nd\right)$, recovering a result analogous to \cite{bubeck2021universal}. We remark that under standard training procedures, the above holds conditional on the specific weight initialization of the network, i.e. with $I(\mathbf{m}^W; \mathcal{D})$ replaced with $I(\mathbf{m}^W; \mathcal{D} \mid \text{init} = \text{init}_0)$, since the initialization is sampled independently before training. 


\subsection{Discussion}



\textbf{Brute Force Search.} Why cannot we brute force search over exponentially many subnetworks, testing each on the dataset, and claiming to have found the lottery ticket without training a dense model by the end of this process? The answer to this is that such a search creates high \textit{mutual information} between the mask and the data, since it involves testing every masked network on the data and choosing the best performing mask. Thus the mutual information based bounds we present in this work hold, because the effective parameter count of the mask derived from this brute force search would be high. Of course, training the full model is a faster way to find a lottery ticket than a brute force search, since training neural networks takes subexponential time \cite{livni2014computational}. 
Algorithms attempting to learn the mask at initialization \cite{sreenivasan2022rare, ramanujan2020whats}, before learning weights, are interesting in that this naturally increases $I(\mathbf{m}^W; \mathcal{D})$ early on in training, although their results are not demonstrated at ImageNet scale, where lots of lottery ticket type results are known to break down \cite{gale2019state, frankle2019stabilizing}. 

\textbf{Necessary conditions.} Our results present necessary but not sufficient conditions for smooth interpolation with a parameterized function class. \cite{bombari2023beyond}  examines whether the overparameterization condition $p = \Omega(nd)$ is \textit{sufficient} to guarantee smooth interpolation in various neural network regimes, finding that it is not for a random feature network, but \textit{is} for a network in the NTK regime of neural network training \cite{jacot2018neural}.\footnote{\cite{bombari2023beyond} do not include regularization in their random feature model, whereas \cite{simon2023more} assume optimal regularization, explaining the seemingly different conclusions about overparameterized RF regression models.} 

\textbf{Subtleties Around Effective Parameters.} While our idea of effective parameters suggest that one can trade off $I(\mathbf{m}^W;\mathcal D)$ and parameter count, this does not imply that a learning algorithm that artificially inflates this mutual information, for instance by encoding a discretization of the data within the mask, will do well as a pruning algorithm. Our modified Law of Robustness presents a \textit{lower bound} on Lipschitz constant, so that the actual Lipschitz constant could easily be higher than the lower bound given. Additionally, one can view the choice of a mask $\mathbf{m}^W$ followed by a choice of weights $\mathbf{w}^W$ constrained by this mask as having access to a collection of function classes $\{\mathcal H_t\}_{t \in T}$, where one first makes a choice $W$ (possibly depending on the data $\mathcal{D}$) of function class $\mathcal{H}_W$ and then a choice of function within the class. In such cases, this idea of effective parameterization accounting for both the mutual information of the choice of function class and the parameter count within the chosen class extends beyond the Law of Robustness to other settings (see Theorems~\ref{thm:vc-mi}, \ref{thm:rc-mi} in Appendix~\ref{app:gbounds}), showing that the principle we introduce of trading off parameters and information is a general one. 


\label{sec:limitations}
\textbf{Limitations.} While pruning was historically grounded in the computer vision setting, where large models make many passes over the data during training and thus memorize noise, recent large language models (LLMs) are trained in the online regime, and such models do not interpolate because each data point is only seen once during training. While we suspect results analogous to ours hold for LLMs, our condition of fitting below the noise
does not apply to such settings, so different theoretical approaches will be needed, and this is an important avenue for future work. Our results are non-constructive in that they show hard bounds on the limits of pruning to find generalizing subnetworks as a function of the information masks have with the dataset, but since our proof applies to a broad range of parameterized function classes (not just neural networks), we do not construct an algorithm for optimal pruning (in the information-theoretic sense) as we do not quantify the optimal tradeoff between mutual information and parameter count in terms of pruning. Doing so to yield specific algorithms at the edge of optimality is left for future work. 

\section{Experiments}
\begin{figure*}[ht!]
    \centering
    \subfigure[
    ]{\label{fig:memorization-left}
    \includegraphics[width=0.32\linewidth]{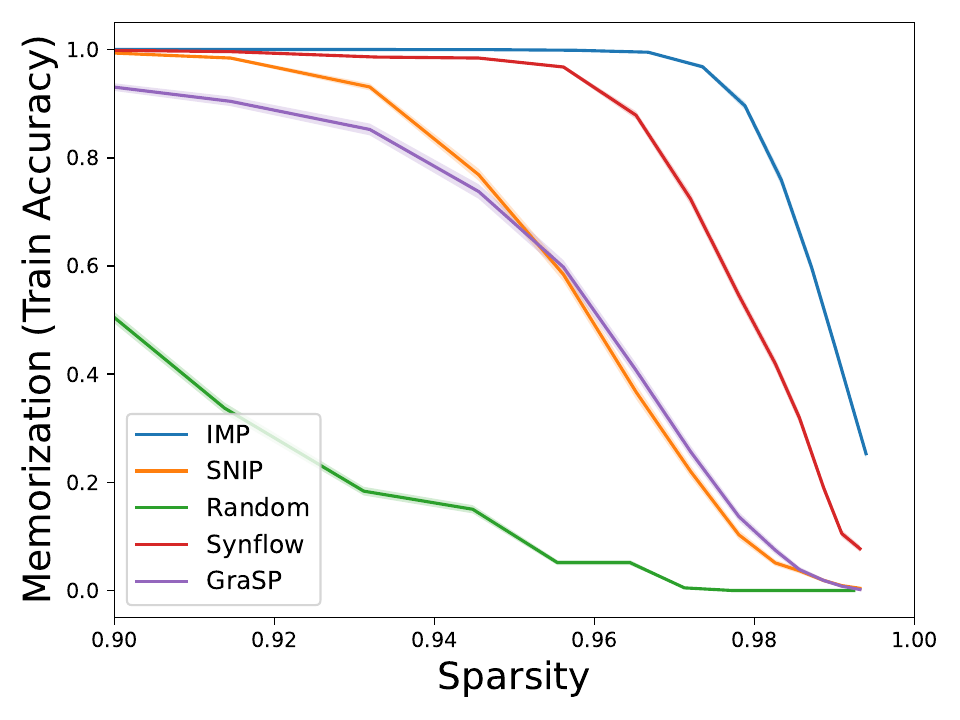}}
    \subfigure[
    ]{\label{fig:memorization-right}
    \includegraphics[width=0.32\linewidth]{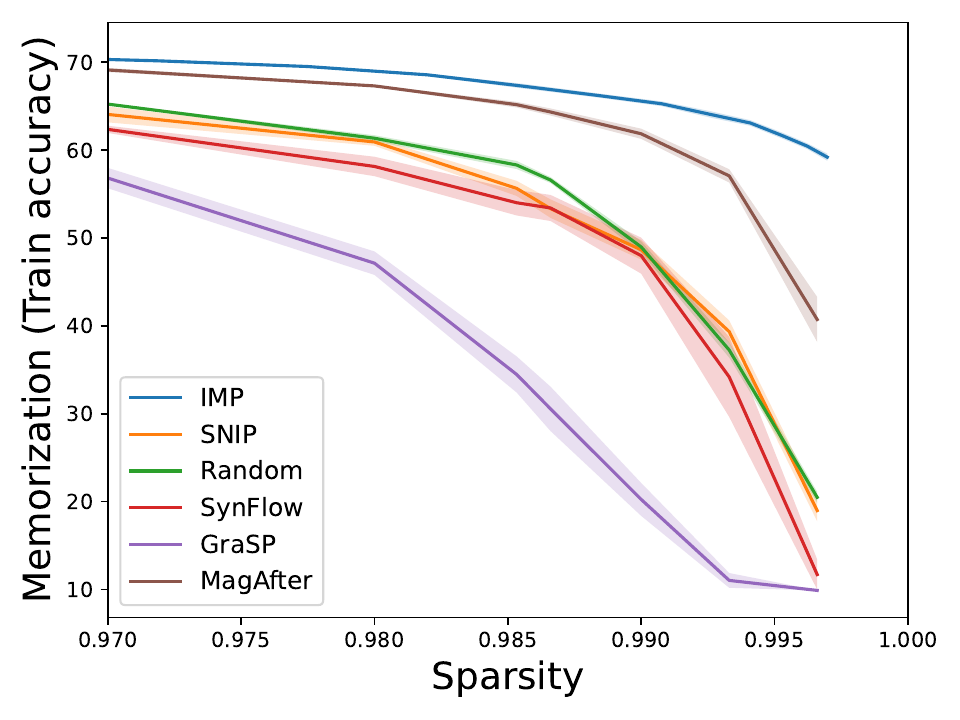}}
    \subfigure[]{\label{fig:cifar-final}
    \includegraphics[width=0.32\linewidth]{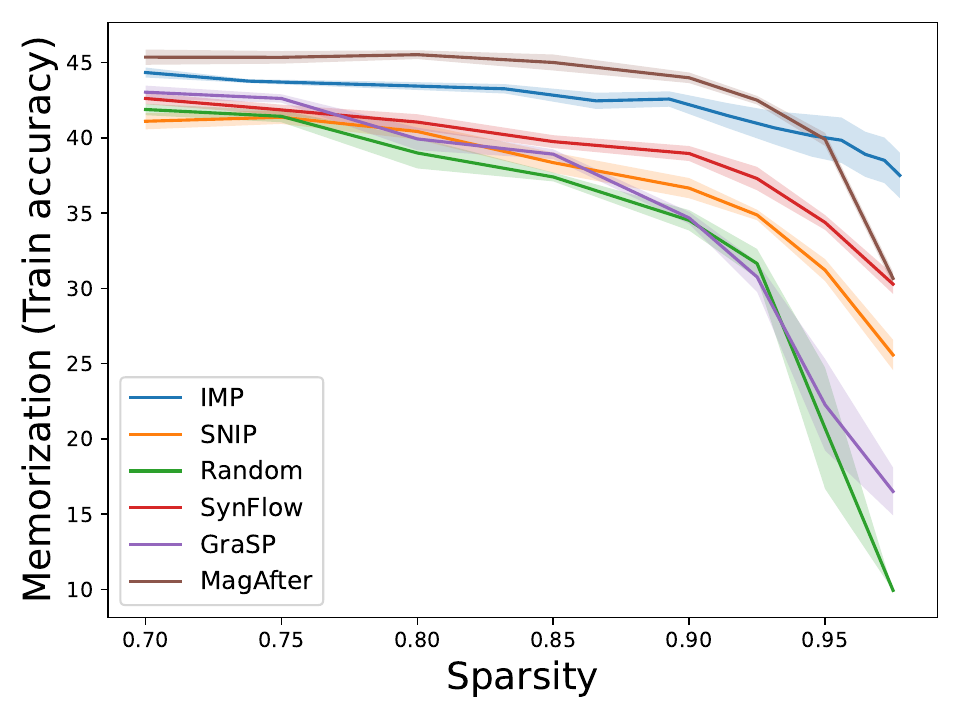}}
    \subfigure[]{\label{fig:imp_corr_epochs_small}
    \hfill\includegraphics[width=0.32\linewidth]{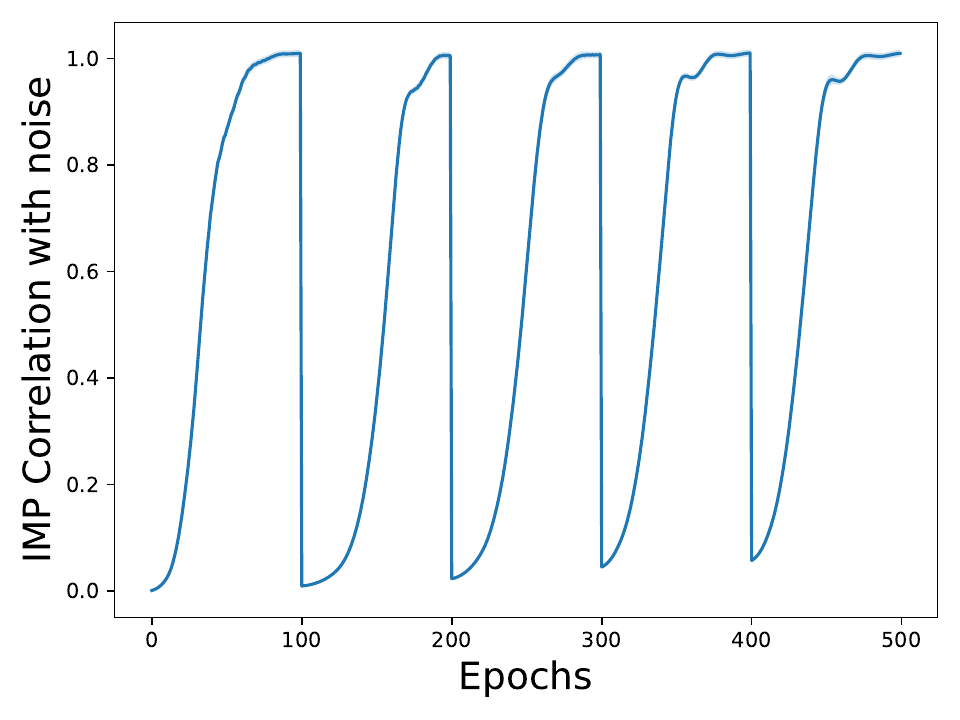}}  
    \subfigure[]{\label{fig:corr-middle}\includegraphics[width=0.32\linewidth]{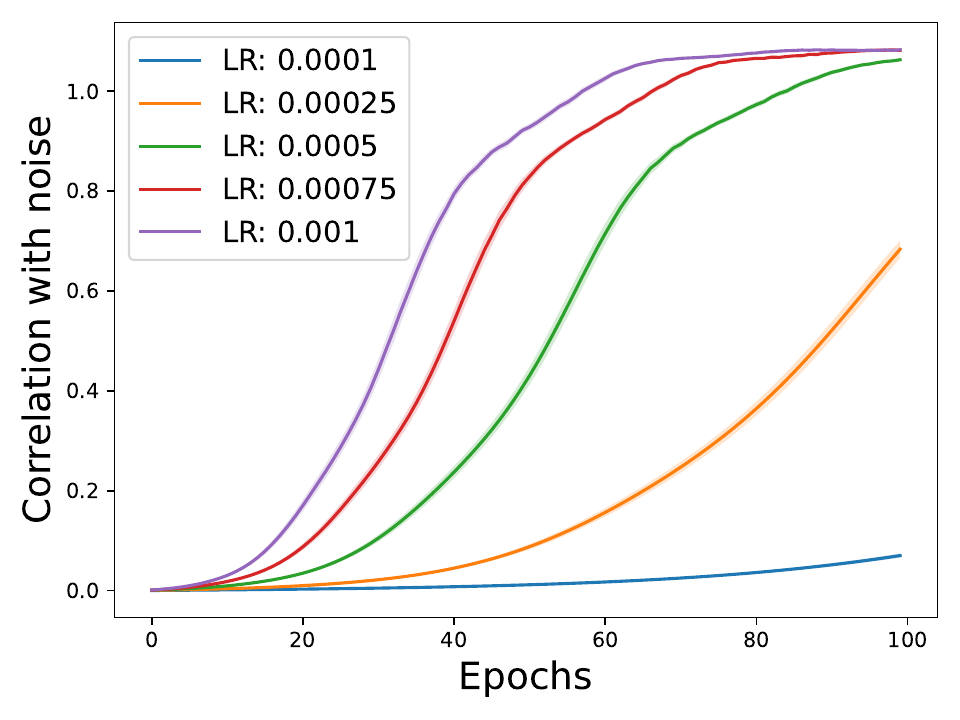}}
    \subfigure[]{\label{fig:corr-right}\includegraphics[width=0.32\linewidth]{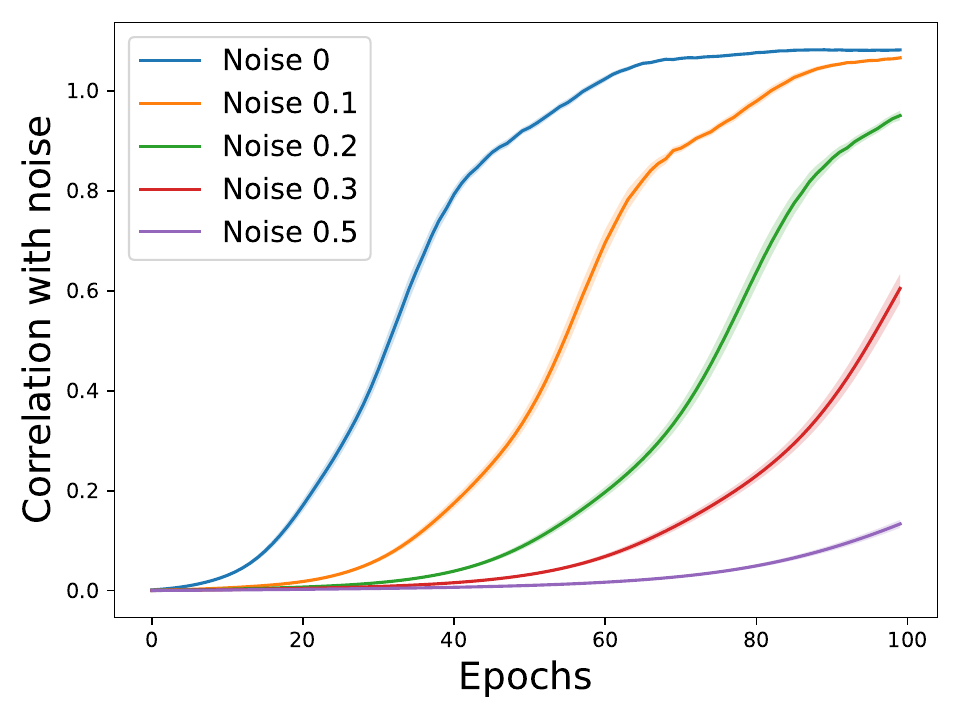}}
    \caption{\textbf{Top}: memorization capacity (train accuracy on noisy data) against sparsity level for different pruning methods. Staying higher on plots is better. Vertical gap between IMP/Magnitude-after pruning reflect additional memorization capacity on this dataset due to mutual information between mask and data. 2-hidden layer MLP on Gaussian data \textbf{\subref{fig:memorization-left}} and noisy FashionMNIST \textbf{\subref{fig:memorization-right}}, 4-layer convNet on noisy CIFAR-10 \textbf{\subref{fig:cifar-final}}. \textbf{Bottom}: Ability to correlate with dataset noise over training as a proxy for network capacity. 5-layer ReLU MLP in a student-teacher task, with $\sigma^2 = 1$ noisy labels; \textbf{\subref{fig:imp_corr_epochs_small}}: correlation with noise during IMP; training increases correlation with noise (hence $p_{\text{eff}}$), pruning then reduces this, before repeating; \textbf{\subref{fig:corr-middle}}: Sweep over learning rates; \textbf{\subref{fig:corr-right}}: Sweep over amount of noise injected into gradients. This illustrates that our quantity of interest, $I(f^W;\mathcal{D})$, and thus $p_{\text{eff}}$, is increasing due to data contained in gradients.}
\end{figure*}


\textbf{Motivating the Experimental Setup.} The main message of our theory is that pruning at initialization may be inherently difficult because a mask derived through training a dense network contributes to effective parameter count. How can one test whether this occurs in neural networks in practice? Ideally, one would track the mutual information of interest, $I(\mathbf{m^W}; \mathcal{D})$ throughout training, but because computing mutual information requires estimating high dimensional distributions, it is only exactly feasible in very small models \cite{kraskov2004estimating, gao2018demystifying}, which is precisely the setting where pruning to high sparsities leads to layer collapse \cite{tanaka2020pruning}, where all the weights in one layer vanish. We perform such experiments for a tiny neural network in Appendix~\ref{app:toymodel}. 
The question then becomes how else one quantify the ``effective parameter count" of a network. We resort to tracking two proxies for mutual information and effective parameter count: memorizing capacity and correlation with noise. The first asserts that ``effective parameter count" can be measured by tracking ``memorization capacity" \cite{cover1965geometrical, sur2019modern, montanari2024tractability}. And if one wants to understand ``ability to memorize," using train accuracy noisy data is a natural candidate. Our second proxy for mutual information is more direct: the correlation of the network function with the noise in labels over the course of training. By Lemma \ref{lem:mi-bound}, we have that this correlation is a lower bound for the quantity of interest $I(f^W; \mathcal{D})$.

To reiterate: what our theory seeks to explain is the empirical finding in  \cite{frankle2020pruning} that shows that methods that derive a data-dependent mask (lottery tickets) outperform methods that do not in terms of end-time test accuracy. So any quantity one argues causes this trend in end-time test accuracy (in our case, mutual information and thus effective parameter count) must exhibit the same trend. Our theory predicts mutual information and thus effective parameter count of the sparse network after pruning should be highest for IMP and magnitude-after pruning, and lower for methods that prune at initialization, like SNIP, GraSP, and SynFlow. Our experiments are meant to illustrate that information gained during training can affect model capacity, rather than serve as exhaustive examinations of large modern architectures.

\subsection{Memorization capacity}


We consider a two-hidden layer network with ReLU activation, on a train set of points (Gaussian data in Figure \ref{fig:memorization-left} and FashionMNIST in Figure \ref{fig:memorization-right}) with noise (random Boolean labels in Figure \ref{fig:memorization-left}, and corrupted images with true labels in Figure \ref{fig:memorization-right}), as well as a convNet on noisy CIFAR-10 in \ref{fig:cifar-final}, and compute the fraction of the training set the sparse network can memorize at each sparsity level. 

\color{black}

One sees that for all datasets here IMP is more expressive and can memorize more points \textit{with the same number of unmasked parameters} as the other subnetworks derived from different pruning algorithms. This gap in memorization capacity is due to mutual information with the data, providing evidence that the effective parameter count our theorems reason can indeed correspond to capacity of  neural networks in practice. We emphasize that the y-axis is \textit{not} test accuracy, but \textit{memorization capacity} (train accuracy on highly noisy data): the fact that our plots give analogous behavior to plots of \textit{noiseless test-accuracy against sparsity} in, e.g., \cite{frankle2020pruning}, is encouraging because the shape of such plots is \textit{what we sought to explain}. Experimental details are in Appendix \ref{app:expdeets}.

\subsection{Correlation with noise} 
We now track our second proxy for mutual information/effective parameter count during training, correlation with label noise. Overall, we see that as the network reaches low train error, it necessarily fits some of the noise, and so correlation with noise increases. We measure this quantity over training in a student-teacher task. Figure \ref{fig:corr-middle} illustrates how increasing learning rate increases the rate at which MI is acquired, suggesting that gradients carry this informational quantity. To verify this, we can corrupt the gradients with noise, finding in Figure \ref{fig:corr-right} that this correlational proxy for MI vanishes when we do so. Finally, Figure \ref{fig:imp_corr_epochs_small} tracks this proxy for MI during IMP, finding that the MI increases during iterations of pruning, decreases after pruning, and breaks down at late time. In context of our theory where $I(\mathbf{m^W};\mathcal{D})$ contributes to $p_{\text{eff}}$, this shows the role of \textit{iterative}, rather than one-shot pruning, may be to increase the effective parameter count via gradient updates, so it can be pruned back down again. Hence the \textit{iterative} nature of pruning allows the trade-off between parameters and information to be made many times. 
    
    \textbf{Takeaways.} Our experiments illustrate how mutual information can be traded off with parameters so that subnetworks derived from pruning at initialization have a lower effective parameter count than those derived from pruning after training. Our work explains why widespread empirical efforts to prune at initialization have run into difficulties, and introduces a new information-parameter trade-off in doing so that may be of independent interest in understanding neural networks trained by gradient descent. 

\newpage 
\section*{Impact Statement}

Our theoretical results are suggestive about the difficulty of pruning neural networks at initialization. We find that there may be information-theoretic barriers to finding extremely sparse, trainable subnetworks without training the full neural network on data so that research efforts to prune to high sparsities may instead be better directed at pruning jointly with training or before inference, where methods are known to exist for pruning to high sparsities without compromising accuracy.  This contributes to a wider literature on the limitations of neural networks.

\bibliography{pruning_bib}
\bibliographystyle{icml2024}

\newpage
\appendix
\onecolumn

\section{Proof of Theorem~\ref{thm:contin}}\label{app:proofs}

Without loss of generality, we will assume that all functions in $\mathcal{F}$ have range contained in $[-1, 1]$.
This is possible because clipping larger outputs to the closest point in $[-1,1]$ can only improve the Lipschitz constant and mean squared error. 



\subsection{Overview of Finite setting, Theorem~\ref{thm:finite}}

\subsubsection{Proof of Lemma~\ref{lem:noise-to-corr}}

\begin{lemma}\label{lem:noise-to-raw-corr}
    Denote the events
    \begin{align*}
        A_f = \left\{\frac{1}{n} \sum_{i = 1} ^n (y_i - f(x_i))^2 \leq \sigma^2 - \epsilon\right\} \qquad B &= \left\{ \frac{1}{n} \sum_{i = 1} ^n z_i^2 \geq \sigma^2 - \frac{\epsilon}{6}\right\} \qquad C = \left\{ \frac{1}{n} \sum_{i = 1} ^n z_i g(x_i) \geq -\frac{\epsilon}{6} \right\}  \\
        D_f &= \left\{ \frac{1}{n} \sum_{i = 1} ^n f(x_i) z_i \geq \frac{\epsilon}{4} \right\}.
    \end{align*}
    Then $A_f \cap B \cap C \implies D_f$, and $\mathbb{P}(B^c \cup C^c) \leq 2 \exp \left( -\frac{n\epsilon^2}{8^3} \right)$.
\end{lemma}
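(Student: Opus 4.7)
The lemma has two independent components: a deterministic implication $A_f \cap B \cap C \implies D_f$, and a concentration bound on $B^c \cup C^c$. I would handle them separately, starting with the implication since the choice of constants $\epsilon/6, \epsilon/6, \epsilon/4$ is tailored to make the arithmetic balance out.

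For the implication, the key algebraic move is to substitute $y_i = g(x_i) + z_i$ and expand the squared residual:
\begin{equation*}
  (y_i - f(x_i))^2 = (g(x_i)-f(x_i))^2 + 2 z_i (g(x_i) - f(x_i)) + z_i^2.
\end{equation*}
Averaging over $i$, the event $A_f$ gives an upper bound $\sigma^2 - \epsilon$ on the left side, while $B$ supplies a lower bound $\sigma^2 - \epsilon/6$ on the $\frac{1}{n}\sum z_i^2$ term. Subtracting and discarding the non-negative $\frac{1}{n}\sum(g(x_i)-f(x_i))^2$ term yields
\begin{equation*}
  \tfrac{2}{n}\sum_{i=1}^n z_i(g(x_i) - f(x_i)) \leq -\tfrac{5\epsilon}{6}.
\end{equation*}
Rearranging and invoking $C$ to lower-bound $\frac{1}{n}\sum z_i g(x_i) \geq -\epsilon/6$ gives $\frac{1}{n}\sum z_i f(x_i) \geq -\epsilon/6 + 5\epsilon/12 = \epsilon/4$, which is exactly $D_f$. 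The whole argument is deterministic and does not depend on $f$ beyond the defining event $A_f$, so there is no subtlety about measurability of $W$.

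For the probability bound, I would apply Hoeffding separately to each event. Since $y_i, g(x_i) \in [-1,1]$, the noise satisfies $z_i \in [-2,2]$ and hence $z_i^2 \in [0,4]$, with $\mathbb{E}[z_i^2] = \mathbb{E}[\mathrm{Var}[y\mid x]] = \sigma^2$ by the tower property. A one-sided Hoeffding gives $\mathbb{P}(B^c) \leq \exp(-2n(\epsilon/6)^2/16) = \exp(-n\epsilon^2/288)$. Similarly, $z_i g(x_i) \in [-2,2]$ with $\mathbb{E}[z_i g(x_i)] = \mathbb{E}[g(x_i)\mathbb{E}[z_i \mid x_i]] = 0$, yielding the same bound for $\mathbb{P}(C^c)$. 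Since $288 < 8^3 = 512$, both bounds are at most $\exp(-n\epsilon^2/8^3)$, and a union bound produces the claimed factor of $2$.

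The hard part is essentially just bookkeeping: one must pick the slack allocated to each event ($\epsilon/6$ each to $B$ and $C$, leaving $\epsilon/4$ as the resulting lower bound in $D_f$) so that the deterministic inequality closes, and simultaneously ensure the Hoeffding exponents can be uniformly absorbed into a single clean rate $n\epsilon^2/8^3$. No information-theoretic machinery is invoked at this stage --- this lemma is purely a deterministic decoupling step that reduces the original below-noise-level event to a noise-correlation event, which is then controlled in the subsequent lemma via the subgaussian + mutual information argument.
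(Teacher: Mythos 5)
Your proof is correct and is essentially the same argument as the paper's, which simply defers to Lemma~2.1 of \cite{bubeck2021universal}: expand $(y_i-f(x_i))^2$ via $y_i=g(x_i)+z_i$, drop the nonnegative $\frac1n\sum(g(x_i)-f(x_i))^2$ term, use $B$ and $C$ to close the arithmetic to $\epsilon/4$, and control $B^c$, $C^c$ by Hoeffding using $z_i^2\in[0,4]$, $z_ig(x_i)\in[-2,2]$ with exponent $n\epsilon^2/288\ge n\epsilon^2/8^3$. The bookkeeping and constants all check out.
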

\begin{proof}
    Follows from Lemma~2.1 of \cite{bubeck2021universal}.
\end{proof}

\begin{lemma}\label{lem:raw-corr-to-centered}
    Denote the events
    \begin{align*}
        E_f = \left\{ \frac{1}{n} \sum_{i = 1} ^n (f(x_i) - \mathbb{E}^{\mu_{\ell_i}}[f])z_i \geq \frac{\epsilon}{8} \right\} \qquad F = \left\{ \frac{1}{n} \sum_{\ell = 1} ^k \left| \sum_{i \in S_\ell} z_i \right| \geq \frac{\epsilon}{8} \right\}.
    \end{align*}
    Then $D_f \implies E_f \cup F$ and $\mathbb{P}(F) \leq 2k \exp\left( -\frac{n\epsilon^2}{8^3k}\right)$. 
\end{lemma}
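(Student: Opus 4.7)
The lemma has two independent parts: a deterministic inclusion $D_f \subseteq E_f \cup F$ (which holds for every $f$), and a tail bound on the event $F$ (which depends on the noise and cluster assignments but not on $f$).

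For the inclusion, my plan is an add-and-subtract decomposition. Writing $f(x_i) = (f(x_i) - \mathbb{E}^{\mu_{\ell_i}}[f]) + \mathbb{E}^{\mu_{\ell_i}}[f]$ splits the sum defining $D_f$ into the centered quantity appearing in $E_f$ plus a remainder $\frac{1}{n}\sum_{\ell=1}^k \mathbb{E}^{\mu_\ell}[f]\cdot\sum_{i\in S_\ell} z_i$. By the WLOG clipping at the top of the appendix, $f$ is $[-1,1]$-valued, so each $|\mathbb{E}^{\mu_\ell}[f]|\leq 1$ and the remainder is bounded in absolute value by $\frac{1}{n}\sum_\ell|\sum_{i\in S_\ell} z_i|$. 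On the event $F^c$ this is strictly less than $\epsilon/8$, and combining with $D_f$ (which says the whole sum is $\geq\epsilon/4$) forces the centered piece to exceed $\epsilon/4-\epsilon/8=\epsilon/8$, i.e., $E_f$ holds.

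For the tail bound, I condition on the covariates $(x_1,\ldots,x_n)$. This freezes the assignments $\ell_i$ and sizes $N_\ell=|S_\ell|$, and makes the noise variables $z_i = y_i-g(x_i)$ independent, mean-zero, and bounded in $[-2,2]$, hence sub-gaussian with proxy $4$. The straightforward pigeonhole step shows that if $\frac{1}{n}\sum_\ell|\sum_{i\in S_\ell} z_i|\geq\epsilon/8$, then some $\ell$ must satisfy $|\sum_{i\in S_\ell} z_i|\geq n\epsilon/(8k)$, so a union bound over $\ell\in[k]$ combined with Hoeffding on the sub-gaussian partial sum (and averaging out the covariates, which is harmless since the Hoeffding bound is uniform in $N_\ell\leq n$) yields a bound of the stated exponential form.

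The main obstacle I foresee is not the proof architecture but shaving the denominator in the exponent from $k^2$ to $k$ so as to match the paper's constant $8^3 k$. The naive pigeonhole loses a factor of $k$ because it replaces the $k$-term sum by its maximum summand. To recover this, I would use a moment-generating-function argument exploiting that the $X_\ell := \sum_{i\in S_\ell} z_i$ are \emph{conditionally independent} across $\ell$: bound $\mathbb{E}[e^{\lambda|X_\ell|}] \leq 2 e^{2\lambda^2 N_\ell}$ from sub-gaussianity, multiply to get $\mathbb{E}\bigl[\exp\bigl(\lambda\sum_\ell|X_\ell|\bigr)\bigr] \leq 2^k e^{2\lambda^2 n}$ via $\sum_\ell N_\ell = n$, and then apply Chernoff with optimal $\lambda$. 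An equivalent reformulation is to write $\sum_\ell|X_\ell| = \max_{v\in\{\pm 1\}^k}\sum_\ell v_\ell X_\ell$ and union-bound Hoeffding over the $2^k$ sign patterns, each of which is a single $\pm 1$-weighted sum of $n$ sub-gaussians. Either route matches the qualitative form of the claim, and is absorbed without loss into the downstream bound given the hypothesis $n\epsilon^2 \geq 8^3 k\log(8k/\delta)$ used in Theorem~\ref{thm:contin}; verifying that one of these sharpenings exactly reproduces the coefficient $8^3$ is the only delicate arithmetic in the proof.
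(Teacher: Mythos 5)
Your argument is correct, but it is worth noting that the paper does not actually prove this lemma at all: its ``proof'' is a one-line citation to Theorem~2 of \cite{bubeck2021universal}, so your write-up is a genuinely self-contained reconstruction rather than a variant of an in-paper argument. The first half of your proposal --- add-and-subtract $\mathbb{E}^{\mu_{\ell_i}}[f]$, group the remainder by mixture component, use $|\mathbb{E}^{\mu_\ell}[f]|\leq 1$ from the clipping convention, and conclude $D_f\cap F^c\implies E_f$ --- is exactly right and is the standard decomposition. Your diagnosis of the tail bound is also accurate: the naive ``max over $\ell$'' pigeonhole plus Hoeffding with $N_\ell\leq n$ yields $2k\exp(-n\epsilon^2/(8^3k^2))$, which is weaker than the stated $2k\exp(-n\epsilon^2/(8^3k))$, and your MGF fix is computed correctly: conditional independence of the $X_\ell$ given the covariates, $\mathbb{E}[e^{\lambda|X_\ell|}]\leq 2e^{2\lambda^2 N_\ell}$, $\sum_\ell N_\ell=n$, and Chernoff at $\lambda=\epsilon/32$ give $\mathbb{P}(F)\leq 2^k e^{-n\epsilon^2/8^3}$ (the $2^k$-sign-pattern union bound is the same computation in disguise).

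The one loose end is that you leave the comparison between $2^k e^{-n\epsilon^2/8^3}$ and the stated $2ke^{-n\epsilon^2/(8^3k)}$ to the downstream hypothesis of Theorem~\ref{thm:contin}, whereas the lemma is stated unconditionally (and is also invoked in Theorem~\ref{thm:finite}, which carries no such hypothesis). This gap closes with one extra observation: writing $u=n\epsilon^2/8^3$, the inequality $2^ke^{-u}>2ke^{-u/k}$ forces $u<k\log 2-\tfrac{k\log k}{k-1}\leq k\log(2k)$, which is precisely the regime in which $2ke^{-u/k}\geq 1$ and the stated bound is vacuously true. Hence $\mathbb{P}(F)\leq\min\bigl(2^ke^{-u},1\bigr)\leq 2ke^{-u/k}$ in all cases, and your argument proves the lemma exactly as stated, not merely up to a constant that must be checked against the application.
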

\begin{proof}
    Follows from Theorem~2 of \cite{bubeck2021universal}.
\end{proof}

From $A_f \cap B \cap C \implies D_f$ and $D_f \implies E_f \cup F$, it follows that
\begin{equation}\label{eq:events}
    A_f \implies (B^c \cup C^c) \cup E_f \cup F.
\end{equation}
In particular $A_{f^W} \implies (B^c \cup C^c) \cup E_{f^W} \cup F$, and using the bounds in Lemmas~\ref{lem:noise-to-raw-corr} and \ref{lem:raw-corr-to-centered} now yields Lemma~\ref{lem:noise-to-corr}. 

\subsubsection{Proof of Lemma~\ref{lem:mi-bound}}

Lemma~\ref{lem:mi-bound} follows from Theorem~3 of \cite{xu2017information}. After translating notation, we have that $\mathbb{P}(|X_w| > \alpha) \leq \delta$ whenever 
\begin{equation}
    n > \frac{a_1^2 C}{\alpha^2}\left(\frac{I(\{X_t\}_{t \in T}; W)}{\delta}+\log \frac{2}{\delta}\right),
\end{equation}
upon which rearranging yields exactly Lemma~\ref{lem:mi-bound}.\footnote{The explicit constant $a_1$ here is not the same as in \cite{xu2017information} as a result of different definitions of ``subgaussian'' - in \cite{xu2017information}, the authors define $\sigma$-subgaussian to mean a random variable $U$ satisfies $\log \mathbb{E}[e^{\lambda (U - \mathbb{E} U)}] \leq \lambda^2 \sigma^2 / 2$. 
The assumed condition $\mathbb{P}(|X| \geq t) \leq 2e^{-t^2/C}$, implies $\log \mathbb{E}[e^{\lambda X}] \leq \lambda^2(9C)/(2)$ (exercises 3.1 d and e of \cite{handel}) so $a_1^2 = 72$.}

\subsubsection{Isoperimetry, Subgaussianity, and Crude Bounds}

Having established Lemma~\ref{lem:noise-to-corr}, to show Theorem~\ref{thm:finite} it remains to upper bound $\mathbb P[E_{f^W}]$. 
We first show that $X_t = \frac{1}{n} \sum_{i = 1} ^n (f^t(x_i) - \mathbb{E}^{\mu_{\ell_i}}[f^t])z_i$ is subgaussian. 
As in \cite{bubeck2021universal}, the isoperimetry assumption implies that $\sqrt{\frac{d}{c}} \frac{f(x_i) - \mathbb{E}^{\mu_{\ell_i}}[f]}{L}$ is $2$-subgaussian. Since $|z_i| \leq 2$, it follows that $\sqrt{\frac{d}{c}} \frac{(f(x_i) - \mathbb{E}^{\mu_{\ell_i}}[f])z_i}{L}$ is $8$-subgaussian, by Proposition~1.2 of \cite{bubeck2021universal}. Hence each $X_t$ is $\left(18 \cdot 8 \frac{cL^2}{dn} \right)$-subgaussian. For convenience we thus define $C_0=\frac{144 cL^2}{dn}$.


Note that the right hand side of \eqref{eq:mi-bound} is a strictly monotone decreasing and hence invertible function of $\delta$, for $\delta \in (0, 1)$.
Thus define
\[
s_{C_0}(\delta)
=
a_1 \sqrt{C_0 \left(\frac{I(\{X_t\}_{t \in T}; W)}{\delta} + \log \frac{2}{\delta} \right)},
\]
where we highlight the explicit dependence on $C_0$. Thus Lemma~\ref{lem:noise-to-corr} yields 
\[
\mathbb{P}(|X_W| > s_{C_0}(\delta)) < \delta.
\]

Recall that we seek an upper bound on\footnote{Here in evaluating $\mathbb{E}^{\mu_{\ell_i}}[f^W]$ we consider $W$ to be fixed, even though $W$ is otherwise random. More precisely, we really mean $\eta(W)$ where $\eta(t)=\mathbb{E}^{\mu_{\ell_i}}[f^t]$ for each fixed $f^t\in\mathcal{F}$.}
\begin{equation*}
    \mathbb{P}(E_{f^W}) = \mathbb{P}\left( \frac{1}{n} \sum_{i = 1} ^n (f^W(x_i) - \mathbb{E}^{\mu_{\ell_i}}[f^W])z_i \geq \frac{\epsilon}{8} \right) \leq \mathbb{P}(|X_W| > \epsilon/8).
\end{equation*} 
Because $s_{C_0}$ is invertible, one has the implicit bound $\mathbb{P}(|X_W| > \epsilon/8) < s_{C_0}^{-1}(\epsilon/8)$. More explicitly, we also have the following crude bound. Setting $\delta = \max\left(\frac{ 2^{7} a_1^2 C_0 I(\{X_t\}_{t \in T}; W)}{\epsilon^2}, 
2e^{-\epsilon^2/(2^7 a_1^2 C_0)}
\right)$, we have
\begin{align*}
    a_1 \sqrt{C_0 \left(\frac{I(\{X_t\}_{t \in T}; W)}{\delta} + \log\frac{2}{\delta}\right)} \geq \frac{\epsilon}{8}
\end{align*}
yielding
\begin{equation}\label{eq:re-mi-bound}
    \mathbb{P}(|X_W| > \epsilon/8) < s^{-1}_{C_0}(\epsilon/8) \leq \max\left(\frac{ 2^{7} a_1^2 C_0 I(\{X_t\}_{t \in T}; W)}{\epsilon^2}, 
    2e^{-\epsilon^2/(2^7 a_1^2 C_0)}
    \right).
\end{equation}
Putting together equations~\eqref{eq:events}~and~\eqref{eq:re-mi-bound} and using that $k\geq 1$, we obtain the following lemma:
\begin{lemma}\label{lem:core-lem}
    \[
        \mathbb{P}(A_{f^W}) \leq 2\exp \left( -\frac{n\epsilon^2}{8^3} \right) + 2k \exp\left( -\frac{n\epsilon^2}{8^3 k}\right) + s^{-1}_{C_0}(\epsilon/8) \leq (2k + 2)\exp\left(-\frac{n\epsilon^2}{8^3 k}\right) + s^{-1}_{C_0}(\epsilon/8).
    \]
\end{lemma}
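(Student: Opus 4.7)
The plan is to combine the three ingredients already assembled in the preceding subsection via a single union bound and then tidy up the first two exponential terms using $k \geq 1$. Specifically, equation \eqref{eq:events} gives the deterministic containment
\[
A_{f^W} \subseteq (B^c \cup C^c) \cup E_{f^W} \cup F,
\]
so $\mathbb{P}(A_{f^W}) \leq \mathbb{P}(B^c \cup C^c) + \mathbb{P}(E_{f^W}) + \mathbb{P}(F)$, and it remains only to insert the previously derived bounds on each of the three pieces.

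First, Lemma~\ref{lem:noise-to-raw-corr} gives $\mathbb{P}(B^c \cup C^c) \leq 2\exp(-n\epsilon^2/8^3)$, where the event $B^c \cup C^c$ concerns only the marginal behavior of the noise variables $z_i$ and the target $g$, so it is independent of the learning algorithm $W$. Next, Lemma~\ref{lem:raw-corr-to-centered} gives $\mathbb{P}(F) \leq 2k\exp(-n\epsilon^2/(8^3 k))$; again $F$ depends only on the mixture assignment and the noise, not on $W$. The third term $\mathbb{P}(E_{f^W})$ is the only place the learning algorithm enters, and this is precisely where \eqref{eq:re-mi-bound} applies: since $X_t = \tfrac{1}{n}\sum_i (f^t(x_i) - \mathbb{E}^{\mu_{\ell_i}}[f^t])z_i$ is $C_0$-subgaussian for each fixed $t$, Lemma~\ref{lem:mi-bound} yields $\mathbb{P}(|X_W| > s_{C_0}(\delta)) < \delta$ for every $\delta \in (0,1)$. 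Inverting $s_{C_0}$ at $\epsilon/8$ gives $\mathbb{P}(E_{f^W}) \leq \mathbb{P}(|X_W| > \epsilon/8) \leq s^{-1}_{C_0}(\epsilon/8)$, which establishes the first inequality in the statement.

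For the second inequality, observe that the assumption $k \geq 1$ (there is at least one mixture component) implies $n\epsilon^2/8^3 \geq n\epsilon^2/(8^3 k)$, hence $\exp(-n\epsilon^2/8^3) \leq \exp(-n\epsilon^2/(8^3 k))$. Therefore
\[
2\exp\!\left(-\tfrac{n\epsilon^2}{8^3}\right) + 2k\exp\!\left(-\tfrac{n\epsilon^2}{8^3 k}\right) \;\leq\; (2k+2)\exp\!\left(-\tfrac{n\epsilon^2}{8^3 k}\right),
\]
and adding $s^{-1}_{C_0}(\epsilon/8)$ to both sides yields the stated bound.

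I do not expect a genuine obstacle here: the lemma is a bookkeeping step that packages the output of Lemmas~\ref{lem:noise-to-raw-corr},~\ref{lem:raw-corr-to-centered}, and~\ref{lem:mi-bound} into a single expression. The only subtlety worth flagging is that the bound on $\mathbb{P}(E_{f^W})$ must accommodate the fact that $W$ is data-dependent; this is exactly what Lemma~\ref{lem:mi-bound} is designed to handle, and the mutual-information cost is absorbed into the function $s_{C_0}$ via the term $I(\{X_t\}_{t \in T}; W)$ appearing in its definition. Once this is in place, the rest is a union bound and the observation that the $2k\exp(\cdot)$ term dominates the $2\exp(\cdot)$ term.
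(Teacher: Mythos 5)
Your proposal is correct and matches the paper's proof: the paper likewise obtains the bound by combining the event implication \eqref{eq:events} with the probability bounds from Lemmas~\ref{lem:noise-to-raw-corr} and~\ref{lem:raw-corr-to-centered} and the inverted subgaussian-plus-mutual-information bound \eqref{eq:re-mi-bound}, then merges the two exponential terms using $k \geq 1$. Your write-up simply makes explicit the union bound and the independence of $B^c \cup C^c$ and $F$ from $W$, which the paper leaves implicit.
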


\subsubsection{Controlling Mutual Information}

We next show Lemma~\ref{lem:mi-control}, which gives a upper bound for $I(\{X_t\}_{t \in T}; W)$.

\begin{proof}[Proof of Lemma~\ref{lem:mi-control}]
First since the chain $[W - \mathcal{D} - \{X_t\}_{t \in T}]$ is Markovian and $f^W = (\mathbf{m}^W, \mathbf{w}^W)$ is a one-to-one function of $W$,
\[
    I(W; \{X_t\}_{t \in T}) \leq I(W; \mathcal{D}) = I(f^W; \mathcal{D}).
\]
Next, by the mutual information chain rule,
one has
\[
    I(f^W; \mathcal{D}) = I\big((\mathbf{m}^W, \mathbf{w}^W); \mathcal{D}\big) 
    = 
    I(\mathbf{m}^W ; \mathcal{D}) + I(\mathbf{w}^W; \mathcal{D} \mid \mathbf{m}^W)
\]
We control the second term via $I(\mathbf{w}^W; \mathcal{D} \mid \mathbf{m}^W) \leq H(\mathbf{w}^W \mid \mathbf{m}^W) \leq \mathbb{E}\left[\log_2 N_{\mathbf{m}^W}\right]$. 
This is because, conditional on the mask $\mathbf{m}^W$, the weights can take on at most $N_{\mathbf{m}^W}$ values (recall the definition $N_{\mathbf{m}} = |\mathcal{W}_{\mathbf{m}}|$.)
\end{proof}
Combining Lemma~\ref{lem:core-lem} with equation~\eqref{eq:re-mi-bound}, followed by applying Lemma~\ref{lem:mi-control} then yields the desired Theorem~\ref{thm:finite}. 

\subsection{The continuous setting}
Our goal will be to show that for our choice of $L$:
\begin{equation}
\label{eq:prob-less-than-delta}
    \mathbb{P}\left( \left\{ \frac{1}{n} \sum_{i = 1} ^n (f^W(x_i) - y_i)^2 \leq \sigma^2 - \epsilon/2 \right\} \cap \left\{ \mathrm{Lip}(f^W) \leq L \right\} \right)
    \leq 
    \delta.
\end{equation}
First, define $W_L$, a modification of $W$, as follows: whenever $\mathrm{Lip}(f^W) > L$, $f^{W_L}$ is instead some prescribed $f^L \in \mathcal{F}$, where $\mathrm{Lip}(f^L) \leq L$. Note that if no such $f_L$ exists, then the probability above is zero regardless and there is nothing to prove. Then the probability in \eqref{eq:prob-less-than-delta} is at most
\begin{equation}\label{eq:tilde-bound}
    \mathbb{P}\left( \left\{ \frac{1}{n} \sum_{i = 1} ^n (f^{W_L}(x_i) - y_i)^2 \leq \sigma^2 - \epsilon/2 \right\} \right).
\end{equation}
where $\mathrm{Lip}(f^{W_L}) \leq L$ always.

Now we proceed with the discretization. For each possible mask $\mathbf{m}$, define 
\[
\mathcal{W}_{\mathbf{m}, L} = \{\mathrm{w} \in \mathcal{W}_{\mathbf{m}}\mid \mathrm{Lip}(f_{\mathbf{m}, \mathbf{w}}) \leq L\}.
\]
Define $\mathcal{W}_{\mathbf{m}, L, \epsilon}$ to be an $\frac{\epsilon}{8J}$-net of $\mathcal{W}_{\mathbf{m}, L}$.  Note that since $\mathrm{diam}(\mathcal{W}) \leq W$ and $\dim \mathcal{W}_{\mathbf{m}, L} \leq \|\mathbf{m}\|_1$, that $N_{\mathbf{m}, L, \epsilon} = |\mathcal{W}_{\mathbf{m}, L, \epsilon}| \leq (1 + 60WJ\epsilon^{-1})^{\|\mathbf{m}\|_1}$ (Corollary~4.2.13 of \cite{VershyninRoman2018Hp:a}). 
We then apply Lemma~\ref{lem:core-lem} to 
\[
\mathcal{F}_{L, \epsilon} \equiv \{f_{\mathbf{m}, \mathbf{w}} \mid \mathrm{Lip}(f_{\mathbf{m}, \mathbf{w}}) \leq L, \mathbf{m} \in \{0, 1\}^p, \mathbf{w} \in \mathcal{W}_{\mathbf{m}, L, \epsilon}\}. 
\]

Define $W_L'$ such that $f^{W_L'}$ uses the same mask as $f^{W_L}$ but the weights of $f^{W_L}$ are rounded to the closest element of $\mathcal{W}_{\mathbf{m}, L, \epsilon}$. 
Then note for two functions $f$ and $g$, one has that if $\|f - g\|_{\infty} \leq \epsilon/8$ and $\|y\|_\infty, \|f\|_\infty, \|g\|_\infty \leq 1$. Since $\|\mathbf{w}^{W_L} - \mathbf{w}^{W_L'}\| \leq \frac{\epsilon}{8J}$, 
$\|f^{W_L} - f^{W_L'}\|_{\infty} \leq \epsilon/8$ by \eqref{eq:param-const}, and thus
\[
    \left|\frac{1}{n} \sum_{i = 1} ^n (y_i - f^{W_L'}(x_i))^2 - \frac{1}{n} \sum_{i = 1} ^n (y_i - f^{W_L}(x_i))^2\right| \leq \epsilon/2,
\]
meaning \eqref{eq:tilde-bound} is again bounded above by the following:
\[
    \mathbb{P}\left(\left\{\frac{1}{n} \sum_{i = 1} ^n (y_i - f^{W'}(x_i))^2 \leq \sigma^2 - \epsilon\right\} \right).
\]
We are now in a position to apply Lemma~\ref{lem:core-lem}, obtaining
\[
    \mathbb{P}\left(\left\{\frac{1}{n} \sum_{i = 1} ^n (y_i - f^{W'}(x_i))^2 \leq \sigma^2 - \epsilon\right\} \right) \leq (2k + 2)\exp\left(-\frac{n\epsilon^2}{8^3 k}\right) + s_{C_0}^{-1}(\epsilon/8)
\]
By our assumption on $k$, the first term is at most $\delta/2$:
\[
    (2k + 2)\exp\left( -\frac{n\epsilon^2}{8^3k}\right) \leq (2k + 2) \frac{\delta}{8k} \leq \frac{\delta}{2}
\]
For the second term, note that $s^{-1}_{C_0}$ is monotone increasing, and thus $s^{-1}_{C_0}(\epsilon/8) \leq \delta/2 \iff \epsilon/8 \geq s_{C_0}(\delta/2)$. Recalling the expression for $s_{C_0}$ and for $C_0$, this condition is simply
\[
    a_1 \sqrt{ \left( 144 \frac{cL^2}{nd} \right) \left( \frac{2I(\{X_t\}_{t \in T}; W_L')}{\delta} + \log \frac{4}{\delta} \right)} \leq \frac{\epsilon}{8}
\]
which is guaranteed when
\[
    L = \frac{\epsilon}{96a_1 \sqrt{2c}} \sqrt{\frac{nd\delta}{I(\{X_t\}_{t \in T}; W_L') + \frac{\delta}{2} \log (4/\delta) }}.
\]
Hence this value of $L$ ensures that
\[
    \mathbb{P}\left( \left\{ \frac{1}{n} \sum_{i = 1} ^n (f^W(x_i) - y_i)^2 \leq \sigma^2 - \epsilon/2 \right\} \cap \left\{ \mathrm{Lip}(f^W) \leq L \right\} \right) \leq \delta.
\]
Then with probability $1 - \delta$ over the data, $f^W$ must have Lipschitz constant $\operatorname{Lip}(f)\geq L$.
Finally, by Lemma~\ref{lem:mi-control}, we have
\[
    I(\{X_t\}_{t \in T}; W_L') \leq I(\mathbf{m}^{W_L'}; \mathcal{D}) + \mathbb{E}[\log_2 N_{\mathbf{m}^W, L, \epsilon}] \leq I(\mathbf{m}^W; \mathcal{D}) + \mathbb{E}[\|\mathbf{m}\|_1] \log_2 (1 + 60WJ\epsilon^{-1}) =: p_{\text{eff}}, 
\]
and hence we conclude that
\[
    \operatorname{Lip}(f) \geq \frac{\epsilon}{96a_1 \sqrt{2c}} \sqrt{\frac{nd\delta}{p_{\text{eff}} + \frac{\delta}{2} \log (4/\delta) }}
\]
with probability $1 - \delta$, concluding the proof of Theorem~\ref{thm:contin}. 

\section{Experimental Details}

\subsection{Experimental Details}\label{app:expdeets}

\subsubsection{Figures \ref{fig:memorization-left} and \ref{fig:memorization-right}}

For Gaussian data, we used $n=30$ data points, each a $d=30$ Gaussian random vector. The same plots persist across $n, d$, these were chosen due to computational resource constraints. The labels are random Boolean in $\{\pm 1\}$. The network is a two-hidden layer MLP with ReLU activation, trained with Cross Entropy Loss and $\eta = 1e-2$ learning rate. We define a train point as ``memorized" if the predictions are within $(\log2)/10$, chosen because predictions being within $\log2$ reflects 50/50 uncertainty, so that a train point is only memorized if the network is highly confidence in its prediction. Our theory is technically for mean-squared error, so we repeated the same experiments with MSE instead and got very similar results, with the same expected patterns holding. Because $n, d$ and the network size are fairly small, we averaged the plots over $k=250$ different networks and datasets with differing random seeds and plotted the average as well as standard error (standard deviation of the mean) over these different trials, finding not much variation so that we may have confidence in our results. 

For FashionMNIST, we added $\sigma^2 = 3$ Gaussian noise to each input image, preserving its original label, allowing us to see if our predictions hold in general settings beyond label corruption and finding, as expected, indeed they do. We train till convergence in loss to within $0.01$ (or until accuracy doesn't change for three consecutive epochs), with $\eta = 1e-3$ on Adam. We use a batch size of $64$ with a two-hidden layer ReLU architecture with a hidden width of $200$. We plotted the mean and standard error (standard deviation of the mean) over $k=10$ different seeds on this dataset. This amount of repetitions with standard datasets like MNIST is common in the literature; \cite{frankle2020pruning} use 5 repetitions on similar datasets like CIFAR10, and less for larger datasets like ImageNet.

We used the pruning code of \cite{tanaka2020pruning} to guarantee consistency in implementation of pruning methods with the rest of the literature. This includes their implementation of all pruning algorithms, though we made modifications to, for instance, add magnitude pruning after training, which they do not include. For the Gaussian case, since the infrastructure for it was not in \cite{tanaka2020pruning}, we wrote our own implementations of the pruning algorithms based on the papers where they were introduced, and checked our implementations' outputs against those of \cite{tanaka2020pruning} where possible, finding agreement. There is further discussion around implementation details of pruning algorithms in \cite{frankle2020pruning}, where they note that their plots mostly agree with those of \cite{tanaka2020pruning}, with some small differences.

\subsubsection{Figures \ref{fig:imp_corr_epochs_small}, \ref{fig:corr-middle}, \ref{fig:corr-right}}

The correlation quantity we track for a fixed dataset is $$\frac{1}{n} \sum_{i=1}^n\left(f\left(x_i\right)-\mathbb{E}^\mu[f]\right) z_i$$ for $n$ the dataset size and $z$ the (fixed) dataset noise, and the data $x_i, y_i$ drawn from our data distribution $\mu$, in our case the output of random Gaussian, with labels arising from the teacher MLP made noisy, with the values of the noise $z_i$ fixed after random generation and used to calculate the correlation. This highlights an important fact, though obvious, that correlation is between the interpolator and the fixed dataset. Training on one dataset will not help you correlate with noise in another, so training is increasing ``expressivity for \textit{this} dataset," in some sense. This quantity lower bounds $I(f^W; \mathcal{D})$, the quantity we wished to track because of its role in $p_{\text{eff}}$, as discussed in the main text. 

In the case of figures \ref{fig:imp_corr_epochs_small}, we track this quantity over the course of IMP on a student-teacher task with noise scale $\sigma^2 =1$. Changing the noise scale simply rescales the y-axis without affecting trend we see. We use a 5-layer ReLU MLP student with hidden width $100$ and a 3-layer teacher with hidden width $50$. This mismatch reflects the fact that our architecture rarely reflects the data-generating process perfectly. We train with Adam with $\eta = 1e-3$, pruning the lowest $20\%$ of weights out at every iteration of IMP, as is standard \cite{frankle2018lottery, frankle2019stabilizing}. We use $n=1000$ data points of dimension $d=50$, where both the student and teacher produce scalar outputs. We average and plot the mean and standard error over different random seeds generating $k=25$ networks (student and teacher) and datasets. The standard error is so small in the plot it is barely visibly. Then for Figure \ref{fig:corr-middle}, \ref{fig:corr-right}, we use the same setup as above except do not prune, and sweep over learning rates and gradient noise instead, tracking the same quantities. These are also averaged over $k=25$ random seeds. 

\subsubsection{Tail end of Figure \ref{fig:imp_corr_epochs_small}}

We see that Figure \ref{fig:imp_corr_epochs_small} gives the expected trend of capacity increasing and decreasing as we train, then prune iteratively during the IMP algorithm to find lottery tickets. Of course, one cannot keep pruning every $100$ epochs forever; eventually one reaches an empty network, so an important sanity check is that such ability to correlate decays at late-time as the network is pruned completely and all weights vanish. We can see below that effective parameter count goes becomes low at extreme sparsities when the network has almost no actual parameters remaining, as one might expect. Again, this is averaged over $k=25$ networks. The critical sparsity level at which this correlation ability begins to decay is the lottery ticket sparsity. For us this occurs after $\approx 9$ pruning iterations where we drop $20\%$ of the weights each time, at which point only $\sim 10\%$ of the weights remain. 

\begin{figure}
    \centering
    \includegraphics[width=200px]{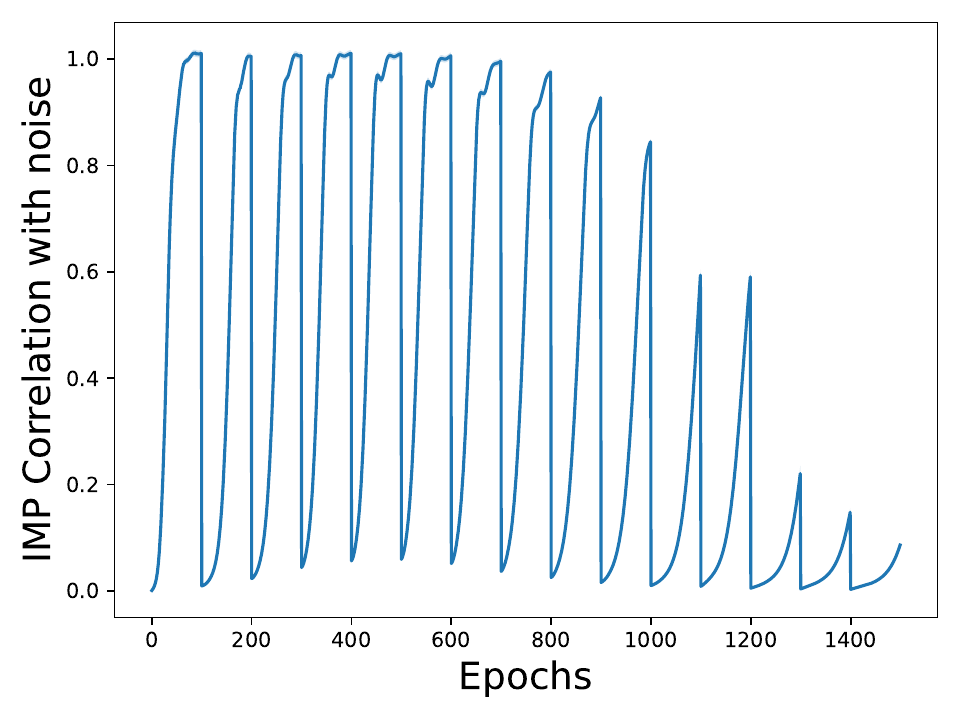} 
    \caption{Correlation with data noise over training epochs, IMP. Expanded version of \ref{fig:imp_corr_epochs_small} pruned to almost complete sparsity.}
    \label{fig:full-wiggly}
\end{figure}

\subsection{Toy model: Computing MI exactly}\label{app:toymodel}

We train very small 1 hidden layer MLPs on small amounts of data. Our toy datasets contain 6 datapoints sampled from $\{\pm1\}^3$ and outputs sampled independently from $\{\pm 1\}$. We train using population gradient descent on MSE
. The MLPs have hidden width $4$ and ReLU activations, with a total of 21 parameters. We measure the mutual information of the mask produced by each pruning method over a range of sparsity levels. SynFlow is omitted here since it is data agnostic and hence its mutual information is identically zero. Following the remarks in the paper, we estimate $I(\mathbf{m}^W; \mathcal{D} \mid \text{init} = \text{init}_0)$; noting that the masking methods are deterministic, this tells us $H(\mathbf{m}^W \mid \mathcal{D}, \mathrm{init}) = 0$, and hence it suffices to estimate $H(\mathbf{m}^W \mid \mathrm{init})$. We do this by directly estimating the probability mass function through sampling. The curves for SNIP and GraSP are jagged due to ReLU causing many of the gradients to be zero, and thus a large portion of the scores are zero to begin. Overall, we see that IMP and magnitude pruning after training have higher mutual information than the other methods. However, these models are so small that at high sparsities, which are the regime of interest, all methods have undergone layer collapse, and thus we are not confident in the representativeness of this experiment. While the maximum size of the sample space is $\binom{21}{10}$, we only sample 32000 times. We note that the graph does not change when moving from $1000$ to $32000$, and thus we believe that the true distribution is supported on far fewer points, and thus our estimates for the mutual information are accurate. We repeat the same experiments for Cross Entropy loss, with hidden layer size 3 (for a total of 21 parameters). 

\begin{figure}[h]
        \centering
        \includegraphics[width=200px]{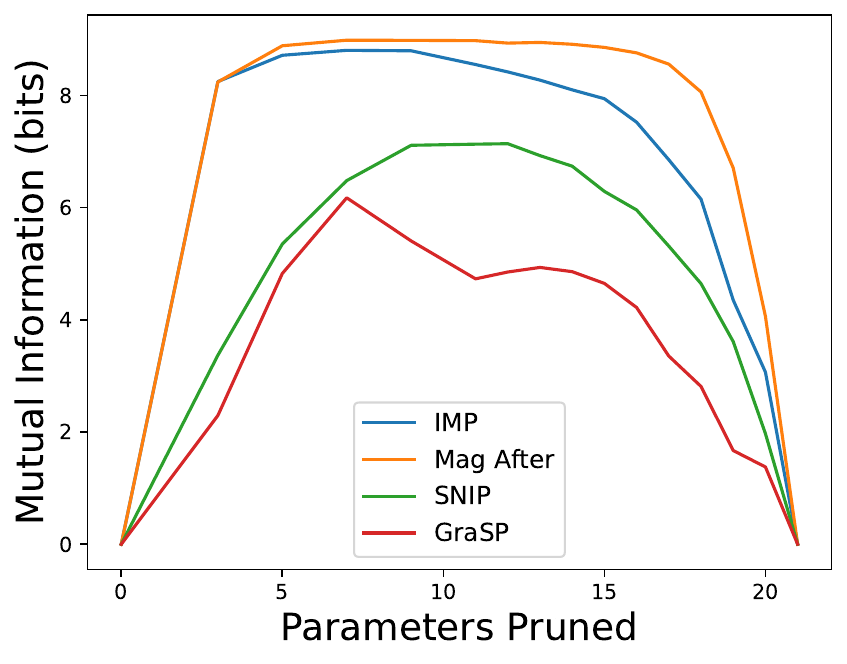}
        \includegraphics[width = 200px]{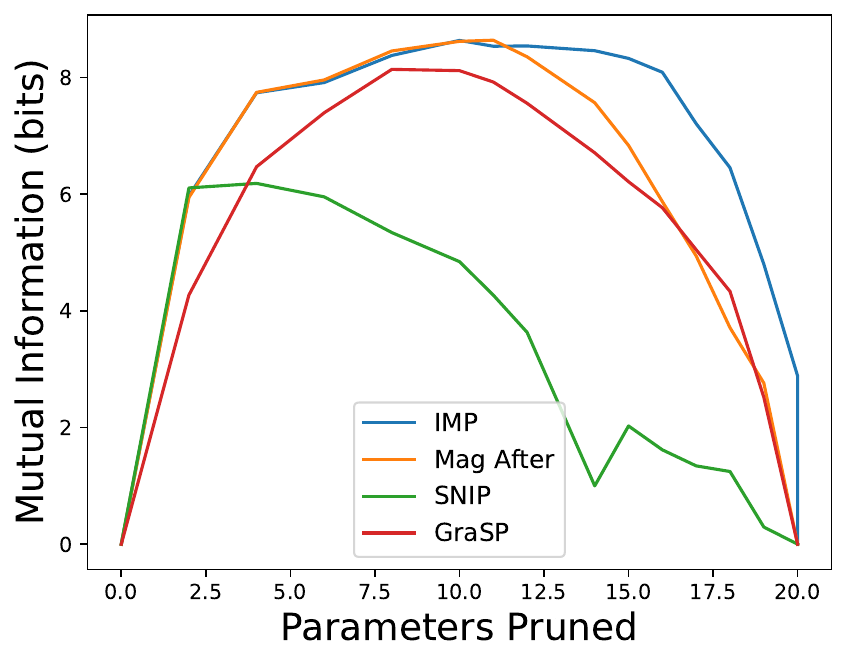}
        \caption{Exact mutual information over on small toy model. Left is regression, right is classification.}
        \label{fig:1a}
\end{figure}

\section{Generalization Bounds}\label{app:gbounds}

\label{app:tradeoff}
\subsection{Understanding the Information-Parameter Tradeoff}

Here, we elaborate on the general principle at the heart of our main result: that for the sake of many theoretical generalization guarantees, parameter count of the model and the model's mutual information with the data are interchangeable. We illustrate this in a toy model. 

\subsubsection{A saturating example}\label{app:tight}
First, we present a toy example which saturates our bound Theorem~\ref{thm:contin}.

We take $n=d^{C}$ for some constant $C$.
For the sake of illustration, consider a dataset of $n$ covariates $x_1,\dots,x_n\stackrel{IID}{\sim}\mathsf{N}(0, I_d/d)$ in dimension $d$, with uniformly random labels $y_1,\dots,y_n \in \{\pm 1\}$. We now consider the following masked function class. First, we sample a single, large matrix with i.i.d. standard Gaussian entries $\mathbf{W} \in \mathbb{R}^{p \times d}$. Next, we construct the following function class:
\begin{equation}
    \mathcal{F} = \{f(x) = (\mathbf{a} \cdot \mathbf{m})^\top \mathrm{ReLU}(\mathbf{W}x + \mathbf{b}) \mid \mathbf{m} \in \{0, 1\}^p \}.
\end{equation}
We will take $\mathbf{b}\in\mathbb R^p$ to have all coordinates $-0.8$. First let us point out that $\mathcal{F}$ is a one-hidden layer neural network with ReLU activations, and width $p$. \textbf{The only learnable parameters of this model are in the choice of mask $\mathbf{m}$ - the weights are randomly drawn at the start and fixed.} Hence this model has $p$ parameters. 
The mask entries determines whether a certain hidden layer node is left nonzero, or is forcibly masked away to zero.

We now consider $p = \exp(O(nd))$. The idea is now that for this toy model, each $x_i$, of which there are $n$, can be memorized by a neuron. Since there are exponentially many hidden nodes, each observing the one dimensional projection $w_j^\top x$, there exists, with high probability, one that aligns very closely with $x_i$ in the sense that $\|w_j-x\|\leq \|x\|/100$.
Call this one $w_{\text{mem}(i)}$, for the memorizer of $i$. Since $n=d^C$, a standard union bound shows that with high probability, for all $i,j$:
\begin{align*}
    \|x_i\|&\in [0.99,1.01],
    \\
    |\langle x_i,x_j\rangle|&\leq d^{-1/3}.
\end{align*}
The triangle inequality then implies (assuming the preceding estimates hold) that $\langle w_{\text{mem}(i)},x_i\rangle\in [0.9,1.1]$ and
\[
    |\langle w_{\text{mem}(i)},x_j\rangle|
    \leq 
    1/10,\quad\forall j\neq i.
\]
Thus, we can set the value of $\mathbf{b}$ to be $-0.8$ in all entries, so that $w^\top _{\text{mem}(i)} x_i + b_{\text{mem}(i)}$ is positive only for the $i$-th covariate, and negative for all the others. After applying ReLU, it is only nonzero for the $i$-th sample -- hence, the $(\text{mem}(i))$-th hidden node indeed memorizes the $i$-th sample.

Now consider the pruning algorithm which looks at the data and sets the mask to only be nonzero on the nodes $\{\text{mem}(i) \mid i \in [n]\}$. 
This function is $O(1)$-Lipschitz on $\{x\in\mathbb R^d~:~\|x\|\leq 1.1\}$.
This is because such any such $x$ satisfies $\langle x,w_{\text{mem}(i)}\rangle\geq 0.8$ for at most $1$ value of $i$. 
This follows easily from the fact that different $w_{\text{mem}(i)}\approx x_i$ are approximately orthogonal, and $0.8^2+0.8^2=1.28$ is significantly larger than $1.1^2$.

Moreover this function has only $s = n \ll nd$ nonzero parameters. However it clearly has $\Theta(nd)$ mutual information with the data ($\Theta(d)$ bits per sample); hence it has $\Theta(nd)$ effective parameters. Thus, this presents a scenario saturating our bound (Theorem~\ref{thm:contin}). In some sense, these are ``lottery tickets" - if we trained this network, we expect a weight $w_k$ with high alignment to a given input to have high learned coefficient $a_k$; our mask simply selectively chooses a single node with high alignment for each sample.

More generally, Theorem~\ref{thm:contin} implies that if one wants $O(nd)$ effective parameters with $O(1)$ Lipschitz constant, while maintaining $s \ll O(nd)$ nonzero parameters, then $\Omega(nd)$ mutual information with the data is required.
This means that, including the mask, one actually needs $\exp(\Omega(nd/s))$ total parameters in the model (c.f. \cite{bubeck2021law}, Theorem 4).

\color{black}
\subsection{Information Theoretic Generalization Bounds}

By applying Lemma~\ref{lem:mi-bound}, one can find analogs of effective parameter count in classical generalization bounds.
Namely, a \emph{low mutual information} choice of \emph{function class with good generalization} will also have good generalization, even if the number of such function classes is very large.
To begin, define the \emph{generalization error} for a function $h$ on a dataset $\mathcal{D} = \{(x_i, y_i)\}_{i = 1} ^n$ of i.i.d input-output pairs in $\mathcal{X} \times \mathcal{Y}$ with respect to a loss $\ell$ as
\[\mathrm{gen}_\mathcal{D}(h) = \left| \mathbb{E}[\ell(h(x), y)] - \frac{1}{n} \sum_{i = 1} ^n \ell(h(x_i), y_i) \right|.\] 

\begin{theorem}[VC Dimension Generalization Bound]\label{thm:vc-mi}
Take $\mathcal{Y} = \{0, 1\}$ and $\ell$ to be 0-1 error.
Let $\{\mathcal{H}_t\}_{t \in T}$ be a collection of binary function classes, where each class $\mathcal{H}_t$ has VC dimension at most $d$. Let $W$ be a random variable taking values on $T$, with $I(W; \mathcal{D}) = I$. Then assuming $n > d + 1$, with probability $1 - \delta$ over the sampling of the data,
\begin{equation}\label{eq:vc-mi}
    \sup_{h \in \mathcal{H}_W} \mathrm{gen}_\mathcal{D}(h)
    \leq \frac{4 + \sqrt{d\log\tfrac{2em}{d}} + \sqrt{\tfrac{4a_1^2}{\delta}(I + \delta\log \tfrac{2}{\delta})}}{\sqrt{2m}}
\end{equation}
\end{theorem}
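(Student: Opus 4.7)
\textbf{Proof proposal for Theorem~\ref{thm:vc-mi}.} The plan is to mimic the structure of the proof of Theorem~\ref{thm:finite}: decompose the quantity of interest into a ``per-class'' term controlled by classical VC theory and a ``choice-of-class'' term controlled by the information-theoretic Lemma~\ref{lem:mi-bound}. Concretely, for each index $t \in T$ define
\[
    Y_t = \sup_{h \in \mathcal{H}_t} \mathrm{gen}_{\mathcal{D}}(h), \qquad X_t = Y_t - \mathbb{E}[Y_t],
\]
where the expectation is over the i.i.d.\ draw of $\mathcal{D}$. The target quantity is $Y_W = \mathbb{E}[Y_W] + X_W$, so it suffices to bound each of the two terms separately.

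For the first term, I would invoke the standard VC/growth-function bound: since $\mathcal{H}_t$ has VC dimension at most $d$, Sauer--Shelah gives growth function at most $(2em/d)^d$, and a symmetrization plus Massart-type argument yields $\mathbb{E}[Y_t] \leq \frac{c_0 + \sqrt{d\log(2em/d)}}{\sqrt{2m}}$ for an absolute constant $c_0 \leq 4$ (the precise constant follows from standard textbook derivations and is not the conceptually interesting part). This bound is uniform in $t$, so in particular it bounds $\mathbb{E}[Y_W]$ regardless of the distribution of $W$.

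For the second term, I would first show via bounded differences that each $X_t$ is subgaussian. Replacing a single sample $(x_i,y_i)$ by $(x_i',y_i')$ changes the empirical risk for any $h$ by at most $1/m$, hence changes the supremum $Y_t$ by at most $1/m$; McDiarmid then implies $X_t$ is $C$-subgaussian with $C = 2/m$ in the convention of the paper. Since the entire process $\{X_t\}_{t \in T}$ is a deterministic function of $\mathcal{D}$, the data-processing inequality yields $I(\{X_t\}_{t \in T}; W) \leq I(\mathcal{D}; W) = I$. Applying Lemma~\ref{lem:mi-bound} then gives, with probability at least $1 - \delta$,
\[
    |X_W| \leq a_1 \sqrt{(C/\delta)\, I + C \log(2/\delta)} \leq \frac{1}{\sqrt{2m}}\sqrt{\tfrac{4a_1^2}{\delta}\bigl(I + \delta \log(2/\delta)\bigr)}.
\]
Combining $Y_W \leq \mathbb{E}[Y_W] + |X_W|$ with the two bounds above yields the stated inequality.

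The main obstacle I anticipate is purely bookkeeping: getting the absolute constant in the classical VC bound on $\mathbb{E}[Y_t]$ to match the stated ``$4$'' while keeping the subgaussian parameter $C$ aligned with the coefficient $4a_1^2/\delta$ in the final display. Everything else is essentially a direct replay of the Theorem~\ref{thm:finite} argument, with ``correlation with noise'' replaced by ``uniform generalization gap over $\mathcal{H}_t$'' and with Sauer--Shelah in place of isoperimetry as the source of concentration for the per-class quantity.
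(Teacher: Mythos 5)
Your proposal is correct and follows essentially the same route as the paper's proof: the same decomposition $Y_t = \sup_{h \in \mathcal{H}_t} \mathrm{gen}_{\mathcal{D}}(h)$, $X_t = Y_t - \mathbb{E}[Y_t]$, the same textbook VC/Sauer bound on $\mathbb{E}[Y_t]$ uniformly in $t$, the same McDiarmid bounded-differences argument for subgaussianity of $X_t$, and the same application of Lemma~\ref{lem:mi-bound} via data processing. The only differences are in constant bookkeeping (the paper records $X_t$ as $(1/2m)$-subgaussian, which is tighter than your $2/m$ but immaterial to the stated bound).
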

Recall that $n$-sample VC dimension generalization bounds for a function class of VC dimension $d$ scale roughly as $\tilde{O}\left(\sqrt{\frac{d}{n}}\right)$. The bound above instead scales as $\tilde{O}\left( \sqrt{\frac{d + \delta^{-1} I}{2m} }\right)$. Although we now have worse dependence on $\delta$ (from Lemma~\ref{lem:mi-bound}), if one views $\delta$ as a small constant, we recover the same asymptotics with $d + I$ in place of $d$.
This is an analog of our main result if one interprets $d+I$ as an ``effective VC dimension''.


Similar results also hold for Rademacher complexity. We define the data-dependent Rademacher complexity of a function class $\mathcal{H}$ as
\begin{equation}
    \mathrm{Rad}_{\mathcal{D}}(\mathcal{H}) = \frac{1}{m} \mathbb{E}_{\boldsymbol{\sigma} \sim \{\pm 1\}^m} \left[\sup_{h \in \mathcal{H}} \sum_{i = 1} ^m \sigma_i h(x_i) \right]
\end{equation}
\begin{theorem}\label{thm:rc-mi}
Let $\ell$ take values in $[-b,b]$ and be 1-Lipschitz in its first argument. Let $\{\mathcal{H}_t\}_{t \in T}$ be a collection of functions and $W$ be a random variable taking values on $T$, with $I(W; \mathcal{D}) = I$. Then with probability $1 - \delta$ over the data sampling,
$$\sup_{h \in \mathcal{H}_W} \mathrm{gen}_{\mathcal D}(h) 
    \leq 2\mathrm{Rad}_{\mathcal{D}}(\mathcal{H}_W) + \frac{6}{\sqrt{m}} \sqrt{\frac{a_1^2b}{\delta} \left(I + \frac{\delta}{2} \log \frac{4}{\delta} \right)}$$
\end{theorem}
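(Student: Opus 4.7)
My plan is to adapt the standard symmetrization proof of the Rademacher generalization bound to the random-class setting by applying Lemma~\ref{lem:mi-bound} to a single carefully centered process, and then stripping off the loss via Talagrand's contraction principle. For each fixed $t \in T$, define
\[
Y_t(\mathcal D) = \sup_{h \in \mathcal H_t}\mathrm{gen}_{\mathcal D}(h) - 2\,\mathrm{Rad}_{\mathcal D}(\ell\circ\mathcal H_t), \qquad X_t(\mathcal D) = Y_t(\mathcal D) - \mathbb E_{\mathcal D'}\bigl[Y_t(\mathcal D')\bigr],
\]
where $\mathcal D'$ is an i.i.d.\ copy of the dataset. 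By construction $\mathbb E_{\mathcal D}[X_t] = 0$ for every fixed $t$. Because $\ell$ takes values in $[-b,b]$, replacing one sample of $\mathcal D$ alters both $\sup_h \mathrm{gen}_{\mathcal D}(h)$ and $\mathrm{Rad}_{\mathcal D}(\ell\circ\mathcal H_t)$ by at most $O(b/m)$, so a single McDiarmid application shows that each $X_t$ is $C$-subgaussian with $C = O(b^2/m)$ in the convention of Lemma~\ref{lem:mi-bound}.

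Next I would invoke classical symmetrization, which yields $\mathbb E_{\mathcal D'}[\sup_h \mathrm{gen}_{\mathcal D'}(h)] \leq 2\,\mathbb E_{\mathcal D'}[\mathrm{Rad}_{\mathcal D'}(\ell\circ\mathcal H_t)]$, so $\mathbb E_{\mathcal D'}[Y_t] \leq 0$ and in particular $Y_W \leq X_W$ pointwise. Since $\{X_t\}_{t \in T}$ is a deterministic function of $\mathcal D$, the data-processing inequality gives $I(\{X_t\}_{t \in T}; W) \leq I(\mathcal D; W) = I$. Lemma~\ref{lem:mi-bound} then controls $X_W$ with probability at least $1 - \delta$ by
\[
X_W \;\leq\; a_1\sqrt{\tfrac{C}{\delta}\bigl(I + \delta\log(2/\delta)\bigr)} \;=\; O\!\left(\tfrac{1}{\sqrt m}\sqrt{\tfrac{b^2}{\delta}\bigl(I + \tfrac{\delta}{2}\log(4/\delta)\bigr)}\right).
\]
Finally, Talagrand's contraction lemma applied to the $1$-Lipschitz maps $u \mapsto \ell(u, y_i)$ gives $\mathrm{Rad}_{\mathcal D}(\ell\circ\mathcal H_W) \leq \mathrm{Rad}_{\mathcal D}(\mathcal H_W)$, and assembling the pieces produces the stated bound, with the explicit constants $6$ and the precise scaling in $b$ falling out from the convention in Lemma~\ref{lem:mi-bound} once the McDiarmid and $a_1$ constants are collected.

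The main obstacle is conceptual rather than computational: because $W$ depends on $\mathcal D$, a naive union bound over $t \in T$ would introduce a vacuous $\sqrt{\log|T|}$ factor, and the whole point of the argument is to replace this with $\sqrt{I}$ by appealing to Lemma~\ref{lem:mi-bound}. The price of doing so is that the $t$-indexed quantity must be mean-zero and subgaussian with respect to $\mathcal D$ for \emph{every} fixed $t$; this is why both the generalization supremum and the empirical Rademacher term must be bundled into a single McDiarmid-controlled object $Y_t$ and then re-centered against an independent copy $\mathcal D'$ to form $X_t$. Setting up this centering correctly, and verifying that $Y_W \leq X_W$ follows from the symmetrization inequality applied pointwise in $t$, is where the subtlety lies; the rest of the argument is essentially a one-line application of Lemma~\ref{lem:mi-bound} followed by Talagrand contraction.
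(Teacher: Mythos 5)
Your proposal is correct and uses the same three ingredients as the paper's proof (McDiarmid to get per-$t$ subgaussianity, Lemma~\ref{lem:mi-bound} to replace a union bound over $T$ with a mutual-information term, and Talagrand's contraction to strip the loss), but the decomposition is genuinely different and arguably cleaner. The paper applies Lemma~\ref{lem:mi-bound} \emph{twice}: once to $Y_t=\sup_{h\in\mathcal H_t}\mathrm{gen}_{\mathcal D}(h)$ centered at its mean, and a second time to the empirical Rademacher complexity $\mathrm{Rad}_{\mathcal D}(\ell\circ\mathcal H_W)$ in order to convert the expected Rademacher complexity (which is what symmetrization delivers) back into the data-dependent one appearing in the statement; this costs a union bound and yields a $1-2\delta$ event that must then be folded back into $1-\delta$ by adjusting constants. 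You instead bundle both terms into the single process $Y_t=\sup_h\mathrm{gen}_{\mathcal D}(h)-2\,\mathrm{Rad}_{\mathcal D}(\ell\circ\mathcal H_t)$, observe that symmetrization makes $\mathbb E[Y_t]\le 0$ for each fixed $t$ so that $Y_W\le X_W$, and invoke Lemma~\ref{lem:mi-bound} once. This buys a single failure event at level $\delta$ and avoids the doubling, at the mild cost of a larger bounded-differences constant (the combined quantity moves by roughly $6b/m$ rather than $2b/m$ per sample, since both the generalization supremum and the Rademacher term react to a changed datapoint), which only affects the absolute constant in front. Your centering against an independent copy $\mathcal D'$, the data-processing step $I(\{X_t\};W)\le I(\mathcal D;W)$, and the final contraction step are all sound; the one thing to be careful about when writing this up fully is to track the bounded-differences constant of the combined process explicitly so that the stated constant $6$ (or whatever replaces it) is actually achieved.
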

The above bound again illustrates that incorporating mutual information into the choice of function class yields generalization bounds which decay gracefully with the value of $I$. In the case that each function class has at most $N$ functions, applying Massart's lemma \cite{shalev2014understanding} yields
\begin{equation}
    \mathrm{Rad}_{\mathcal{D}}(\mathcal{H}_W) \leq \frac{b\sqrt{2 \log N}}{\sqrt{m}}.
\end{equation}
Taking $p=\log N$ as the associated parameter count then recovers a generalization bound of the form $\tilde{O}\left(\sqrt{\tfrac{p + \delta^{-1} I}{m}}\right)$.  

\subsection{Proof of Theorem~\ref{thm:vc-mi} (VC Dimension)}


We begin with a standard generalization bound for a fixed function class $\mathcal{H}$ of VC dimension at most $d$:
\[
    \mathbb{E}_{S \sim \mathcal{D}^m} \left[ \sup_{h \in \mathcal{H}} \mathrm{gen}_{\mathcal{D}}(h) \right] 
    \leq 
    \frac{4 + \sqrt{\log(\tau_\mathcal{H}(2m))}}{\sqrt{2m}}
    \leq 
    \frac{4 + \sqrt{d\log (2em/d)}}{\sqrt{2m}}.
\]
Here the first inequality is Theorem 6.11 from \cite{shalev2014understanding} while the second is Sauer's lemma.
Our strategy will be to argue that the bound above holds with high probability for fixed $\mathcal{H}$ by the bounded differences inequality and conclude using Lemma~\ref{lem:mi-bound}.

\begin{proof}[Proof of Theorem~\ref{thm:vc-mi}]
    We define $Y_t = \sup_{h \in \mathcal{H}_t} \mathrm{gen}_{\mathcal{D}}(h)$ and $X_t = Y_t - \mathbb{E}[Y_t]$. 
    Note that $X_t$ changes by at most $1/m$ if a single data-point is modified. Therefore McDiarmid's bounded differences inequality implies
    \begin{equation}
    \mathbb{P}(|X_t| \geq \varepsilon) = \mathbb{P}(|Y_t - \mathbb{E}[Y_t]| \geq \varepsilon) \leq 2e^{-2\epsilon^2m},
    \end{equation}
    i.e. $X_t$ is $(1/2m)$-subgaussian.  Applying Lemma~\ref{lem:mi-bound} now implies that with probability $1 - \delta$, one has
    \[
    X_W \leq \sqrt{\frac{a_1^2}{2m} \left( \frac{I}{\delta} + \log \frac{2}{\delta} \right)}. 
    \]
    Noting $X_W = Y_W - \mathbb{E}[Y_W \mid W]$ yields that with probability $1 - \delta$, 
\[  
    Y_W \leq \frac{4 + \sqrt{\log(\tau_\mathcal{H}(2m))} + \sqrt{\tfrac{2a_1^2}{\delta}(I + \delta\log \tfrac{2}{\delta})}}{\sqrt{2m}}.
\]
Applying Sauer's lemma and recalling the definition of $Y_W$ concludes the proof.
\end{proof}

\subsection{Proof of Theorem~\ref{thm:rc-mi} (Rademacher Complexity)}
We again begin with a standard Rademacher Complexity based generalization bound for a fixed function class $\mathcal{H}$:
\[
    \mathbb{E}\left[\sup_{h \in \mathcal{H}} \mathrm{gen}_{\mathcal{D}}(h)\right] \leq 2\mathbb{E}_{\mathcal{D}} [\mathrm{Rad}_{\mathcal{D}} (\ell \circ \mathcal{H})]
\]
This is Lemma~26.2 of \cite{shalev2014understanding}. Our strategy is equivalent to the above. 
\begin{proof}
We once more define $Y_t = \sup_{h \in \mathcal{H}_t} \mathrm{gen}_{\mathcal{D}}(h)$ and $X_t = Y_t - \mathbb{E}[Y_t]$. Note that $X_t$ changes by at most $2b/m$ when a given datapoint is changed, and thus McDiarmid's bounded differences inequality again implies that $X_t$ is $(2b^2/m)$-subgaussian. Then applying Lemma~\ref{lem:mi-bound} yields that with probability $1 - \delta$,
\[ 
    Y_W \leq \frac{1}{\sqrt{m}} \sqrt{\frac{2a_1^2 b^2}{\delta} \left(I + \delta \log \frac{2}{\delta}\right)} + \mathbb{E}[Y_W \mid W].
\]
Applying Lemma 26.2 \cite{shalev2014understanding} above, we obtain the following almost sure inequality:
\[
    \mathbb{E}[Y_W \mid W] \leq 2\mathbb{E}_{\mathcal{D}} [\mathrm{Rad}_{\mathcal{D}}(\ell \circ \mathcal{H})]
\]

We again note that $\mathrm{Rad}_{\mathcal D}(\ell \circ \mathcal{H}_W)$ changes by at most $2b/m$ when changing an individual datapoint. Hence, once centered, it is again $(2b^2/m)$-subgaussian. Applying Lemma~\ref{lem:mi-bound} again produces that with probability $1 - \delta$,
\[
    \left| \mathrm{Rad}_{\mathcal{D}}(\ell \circ \mathcal{H}_W) - \mathbb{E}_{\mathcal{D}}[\mathrm{Rad}_{\mathcal{D}} (\ell \circ \mathcal{H}_W) \mid W] \right| \leq \sqrt{\frac{2a_1^2 b^2}{m} \left(\frac{I}{\delta} + \log \frac{2}{\delta} \right)}.
\]
Thus with probability $1 - 2\delta$,
\[
    \sup_{h \in \mathcal{H}_W} \mathrm{gen}_{\mathcal D}(h) \leq Y_W \leq 2 \mathrm{Rad}_{\mathcal{D}}(\ell \circ \mathcal{H}_W \circ S) + \frac{1}{\sqrt{m}} \sqrt{\frac{18a_1^2b^2}{\delta} \left( I + \delta \log \frac{2}{\delta} \right)}.
\]
Noting that $\ell$ is Lipschitz in its first argument, we apply Talagrand's contraction lemma (see e.g. \cite{ledoux2013probability,shalev2014understanding}) to conclude the proof of Theorem~\ref{thm:rc-mi}. 
\end{proof}

\end{document}